\documentclass{article}

 \usepackage[preprint]{neurips_2026}

\usepackage[utf8]{inputenc} 
\usepackage[T1]{fontenc}    
\usepackage{hyperref}       
\usepackage{url}            
\usepackage{booktabs}       
\usepackage{amsthm}         
\usepackage{nicefrac}       
\usepackage{microtype}      
\usepackage{xcolor}         
\usepackage{enumitem}

\usepackage{amsmath}
\usepackage{amssymb}
\usepackage{mathtools}
\usepackage{enumitem}
\usepackage{multirow}
\usepackage{algorithm}
\usepackage{algorithmic}
\usepackage{float}
\usepackage{mdframed}
\usepackage{subcaption}
\usepackage{graphicx}
\usepackage{xspace}
\usepackage{bm}
\usepackage{wrapfig}

\definecolor{frozencol}{RGB}{30,80,200}
\definecolor{traincol}{RGB}{200,40,40}
\newcommand{\fcore}[1]{{\color{frozencol}\bm{#1}}}
\newcommand{\tcore}[1]{{\color{traincol}\underline{\bm{#1}}}}
\newcommand{\fcoretext}[1]{{\color{frozencol}\textbf{#1}}}
\newcommand{\tcoretext}[1]{{\color{traincol}\underline{\textbf{#1}}}}

\theoremstyle{plain}
\newtheorem{theorem}{Theorem}[section]
\newtheorem{proposition}[theorem]{Proposition}
\newtheorem{lemma}[theorem]{Lemma}

\theoremstyle{remark}
\newtheorem{remark}[theorem]{Remark}

\newcommand{\ten}[1]{\bm{\mathcal{#1}}}
\newcommand{\mat}[1]{\mathbf{#1}}
\newcommand{\method}{\textsc{FuRA}\xspace}
\DeclareMathOperator{\rank}{rank}
\DeclareMathOperator{\col}{col}
\DeclareMathOperator{\row}{row}

\newif\ifshowcomments
\showcommentstrue

\title{\method: Full-Rank Parameter-Efficient Fine-Tuning with Spectral Preconditioning}

\author{
Yequan Zhao$^{1}$ \quad
Ruijie Zhang$^{1}$ \quad
Liyan Tan$^{1}$ \quad
Niall Moran$^{2}$ \quad
Tong Qin$^{2}$ \quad
Zheng Zhang$^{1}$ \\
$^{1}$University of California at Santa Barbara \qquad
$^{2}$Amazon Lab126 \\
\texttt{\{yequan\_zhao, ruijiezhang, liyan\_tan\}@ucsb.edu} \\
\texttt{\{nialmora, tqinmath\}@amazon.com} \\
\texttt{zhengzhang@ece.ucsb.edu}
}

\begin{document}

\maketitle

\begin{abstract}
Both full fine-tuning (Full FT) and parameter-efficient methods like LoRA add weight updates without regard to the spectral structure that pretraining has established. This allows noisy gradients from a small fine-tuning distribution to freely perturb the robust features learned through pretraining. We first identify \emph{spectral preconditioning} as the key missing ingredient: reparameterizing each weight $\mat{W}$ through its full-rank SVD and freezing one singular basis confines every update to the pretrained column space, yielding a preconditioned optimizer that outperforms unconstrained Full FT at the same parameter count. To make this insight practical, we propose \method\ (\textbf{Fu}ll-\textbf{R}ank \textbf{A}daptation), which factorizes $\mat{W}$ via a block tensor-train decomposition $\mat{W}=\mat{L}\,\mat{S}\,\mat{R}$: the large core $\mat{L}$ is frozen at the pretrained block-wise SVD basis while only the small core $\mat{R}$ and per-block singular values $\mat{S}$ are trained. This single design choice simultaneously delivers full-rank spectral preconditioning, full-rank update capacity, and parameter, step time, memory efficiency on par with LoRA. 
\method\ outperforms Full FT on LLM fine-tuning ($+1.37$ on LLaMA-3-8B commonsense reasoning), LLM math reinforcement learning, and VLM visual instruction tuning. The 4-bit quantized version QFuRA also outperforms QLoRA. Code is available at \url{https://github.com/olokevin/FuRA-NIPS}.

\end{abstract}

\section{Introduction}
\label{sec:intro}

\begin{wrapfigure}{r}{0.38\linewidth}
	\vspace{-10pt}
	\centering
	\includegraphics[width=\linewidth]{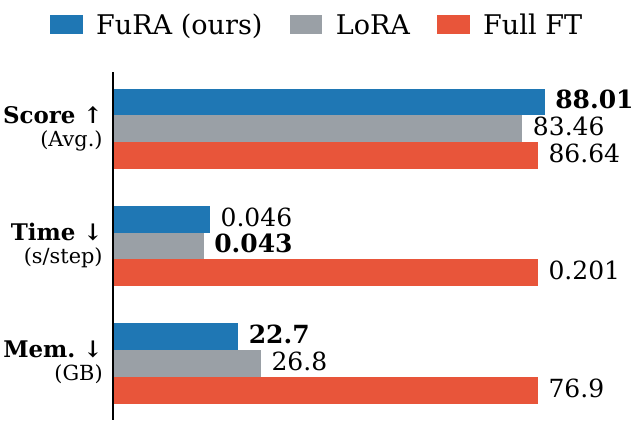}
	\caption{\method\ delivers higher accuracy than Full FT with LoRA-level runtime and the lowest GPU memory.}
	\label{fig:teaser}
	\vspace{-10pt}
\end{wrapfigure}

Large language models~\cite{grattafiori2024llama3,qwen3} (LLM) acquire rich, transferable representations during pretraining on web-scale corpora. Fine-tuning these models on a small, task-specific dataset, whether through supervised finetuning (SFT) or reinforcement learning with verifiable rewards (RLVR) ~\cite{shao2024deepseekmath,yu2025dapo}, can unlock strong downstream performance. Full fine-tuning (Full FT) updates every parameter and remains the performance ceiling on most benchmarks, but incurs prohibitive memory and compute costs. Parameter-efficient methods such as LoRA~\cite{hu2022lora} circumvent this cost by constraining the update $\Delta \mat{W}$ to a low-rank subspace, yet a persistent accuracy gap relative to Full FT ~\citep{schulman2025loranoregret} motivates continued research into more expressive alternatives including~\cite{liu2024dora, meng2024pissa, zhang2023adalora, liu2025lift}.

We argue that both Full FT and LoRA share a deeper limitation: \textbf{neither respects the spectral structure of the pretrained weight.} The update is unconstrained in direction and can push the model along axes orthogonal to the feature manifold that pretraining established. Specifically, the update $\Delta \mat{W}$, whether computed directly from gradients (Full FT) or accumulated in a low-rank adapter $\mat{B}\mat{A}$ (LoRA), is added to $\mat{W}$ without regard to its learned spectral geometry.  Because the fine-tuning data is orders of magnitude smaller and narrower in distribution, its gradients are noisier estimates of the true task-relevant directions. Allowing these noisy updates to freely perturb robust pretrained features risks degrading generalization, a failure mode closely related to catastrophic forgetting \cite{kirkpatrick2017overcoming}.
This observation suggests a different design principle: rather than adding an unconstrained perturbation to $\mat{W}$, fine-tuning should \textbf{re-align the pretrained features} in a way that is guided by, and confined to, the subspace the model has already learned. If the pretrained weight encodes a coordinate system for useful representations, adaptation should re-orient within that coordinate system rather than demolish and rebuild it.

Through a controlled study of SVD FT  (\S\ref{sec:method:motivation}), we further validate that confining weight updates to the pretrained column space can produce superior performance in LLM fine tuning. However, directly doing so can introduce a large number of training variables like full FT. This motivates us to develop \method\ (\textbf{Fu}ll-\textbf{R}ank \textbf{A}daptation, Figure~\ref{fig:fura}), a parameter-efficient method that
simultaneously delivers 1) full-rank update capacity, 2) high accuracy through full-rank spectral preconditioned update, and 3) LoRA-level parameter efficiency.

\method\ outperforms other PEFT methods and Full FT without sacrificing inference efficiency: $+2.87$ over DoRA \cite{liu2024dora} and $+1.37$ over Full FT on LLaMA-3-8B Commonsense SFT, outperforms Full FT on Qwen-1.7B/7B math RLVR, $+1.1$ over Full FT on VLM visual instruction tuning, all with similar memory and wall-clock step time of LoRA. The 4-bit quantized version QFuRA also surpasses QLoRA \cite{dettmers2023qlora} by $+2.51$ on LLaMA-3-70B math fine-tuning.

\textbf{Contributions.} Our research contributions are summarized below.
\begin{itemize}[leftmargin=*,itemsep=1pt,topsep=1pt]
\item We study the spectral properties of LLM fine-tuning and identify spectral preconditioning as a key missing ingredient. We show that both full fine-tuning and LoRA ignore the pretrained spectral structure, while controlled SVD-based fine-tuning demonstrates that aligning updates with this structure yields better performance than full fine-tuning.

\item Based on these observations, we propose \method, a block tensor-train factorization that makes spectral preconditioning a practical PEFT method. A single architectural choice simultaneously achieves full-rank spectral preconditioning, full-rank update capacity, and LoRA-level parameter and computing efficiency.

\item We prove two key properties of FuRA: although it trains only a small fraction of parameters, it can realize full-rank updates; and its effective update acts as a column-space projection combined with singular-value preconditioning.

\item We demonstrate that \method\ outperforms Full FT on both SFT and RLVR tasks with only $<2\%$ trainable parameters and no task-specific rank tuning, establishing a new state of the art for parameter-efficient LLM adaptation. 

\end{itemize}

\begin{figure}[t]
  \centering
  \includegraphics[width=0.8\linewidth]{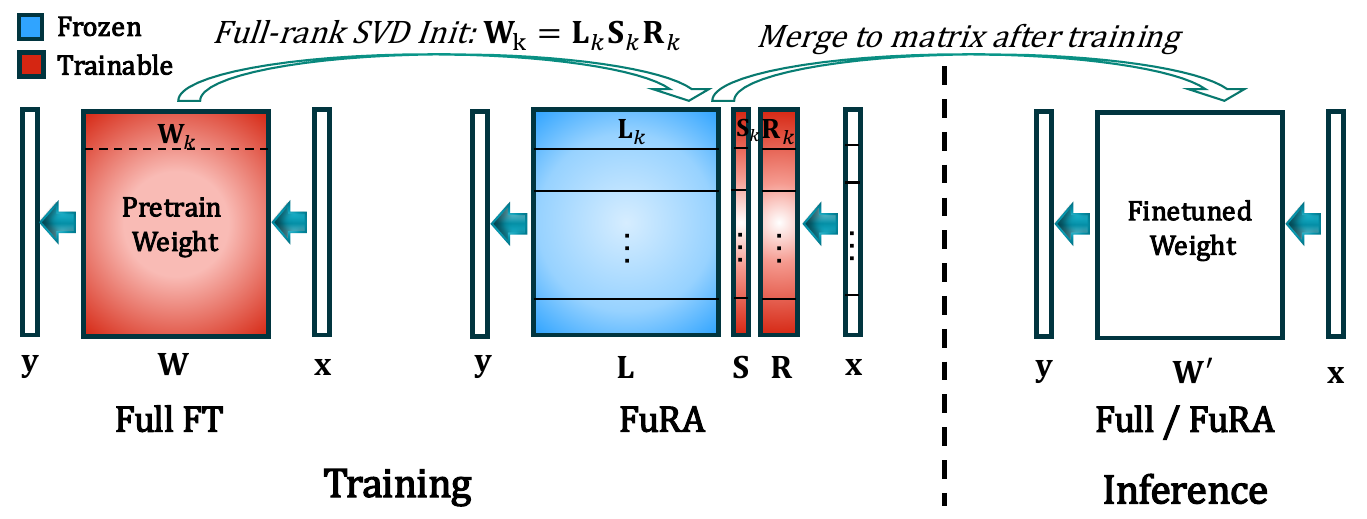}
  \caption{\method\ replaces each pretrained linear layer $\mat{W}$ with a lossless block tensor-train factorization $\mat{W} = \mat{L}\,\mat{S}\,\mat{R}$. Tensors are flattened to matrix for better demonstration. Each slice is initialized by the full-rank SVD of weight block $\mat{W}_k$, and performs spectral preconditioned update.}
  \label{fig:fura}
  \vspace{-15pt}
\end{figure}

\section{Background and Related Work}
\label{sec:related}

\textbf{PEFT methods for LLMs.} LoRA \cite{hu2022lora} constrains the weight update to $\Delta \mat{W} = \mat{B}\mat{A}^\top$ with rank $r \ll \min(d_\text{in}, d_\text{out})$, training only $\mathcal{O}(r(d_\text{in}+d_\text{out}))$ parameters per layer. This low-rank constraint primarily serves parameter efficiency rather than reflecting task-specific inductive bias. Subsequent work improves LoRA along several directions: QLoRA \cite{dettmers2023qlora} adds weight quantization, DoRA \cite{liu2024dora} decouples magnitude and direction, AdaLoRA \cite{zhang2023adalora} adapts rank across layers, and VeRA \cite{kopiczko2024vera} shares fixed random bases while training only diagonal scales. 
Tensor-train decomposition \cite{oseledets2011tt,novikov2015ttnet,qiu2024btt} offer another route; adapters such as LoRETTA \cite{yang2024loretta} and LoRTA \cite{hu2024lorta} train all tensor cores but leave $\Delta \mat{W}$ low-rank.
Despite these advances, a performance gap with Full FT remains.

\textbf{PEFT with higher or full rank capacity.} A natural remedy is to lift the rank cap. MoRA \cite{jiang2024mora} reshapes the adapter into a square matrix to allow higher-rank updates; RandLoRA \cite{albert2025randlora} uses fixed random bases with trainable scales; BOFT \cite{liu2024boft} applies a butterfly-factored orthogonal transform, and 
QuanTA \cite{chen2024quanta}, FourierFT \cite{gao2024fourierft}, etc. $\text{S}^2$FT \cite{yang2024s2ft} and LIFT \cite{liu2025lift} train structured- or unstructured-sparse subsets of weights, achieving full-rank updates but requiring gather/scatter operations that are less tensor-parallel friendly. These methods achieve nominally full-rank $\Delta \mat{W}$ at the LoRA budget, but have not consistently closed the gap to Full FT on standard benchmarks. We argue that rank alone is insufficient; the \emph{subspace} in which the update operates is equally important.

\textbf{Spectrum-informed adapters.} A related line of work leverages the pretrained weight’s SVD to guide adaptation. PiSSA \cite{meng2024pissa} initializes LoRA from top-$r$ singular directions, MiLoRA \cite{wang2024milora} uses bottom-$r$, Spectral Adapter \cite{zhang2024spectral} parameterizes updates in the spectral basis, and SVFT \cite{lingam2024svft} trains a dense middle matrix between frozen SVD factors. While these approaches incorporate spectral information, they face two limitations. First, the low-rank constraint forces a choice of spectral slice, yet the optimal subspace is task-dependent: SFT emphasizes principal directions, whereas RLVR favors off-principal ones \cite{zhu2025rlvrpath}. Second, SVD-based initialization is only a soft constraint, allowing updates to drift away from the pretrained subspace during training \cite{yin2025evaluating}. 

\section{Spectral Preconditioning Improves LLM Fine-tuning}
\label{sec:spectrum}

Throughout this paper, $\mat{W} \in \mathbb{R}^{d_\text{out} \times d_\text{in}}$ denotes a pretrained linear layer with forward pass $\mat{y} = \mat{W}\mat{x}$. And we use $\mat{W}'$ to denote the fine-tuned weight. 

\subsection{Observations from Full FT Training Dynamics}

Both Full FT and existing PEFT methods update weights without considering the spectral structure learned during pretraining, risking disruption of the pretrained feature geometry. We hypothesize that fine-tuning should instead preserve and exploit this structure. To investigate this, we analyze the training dynamics of Full FT on LLaMA-3-8B fine-tuned on Math-10K \cite{hu2023llmadapters} (Figure~\ref{fig:motivation_svdft}).


\begin{figure}[t]
  \centering
  \includegraphics[width=\linewidth]{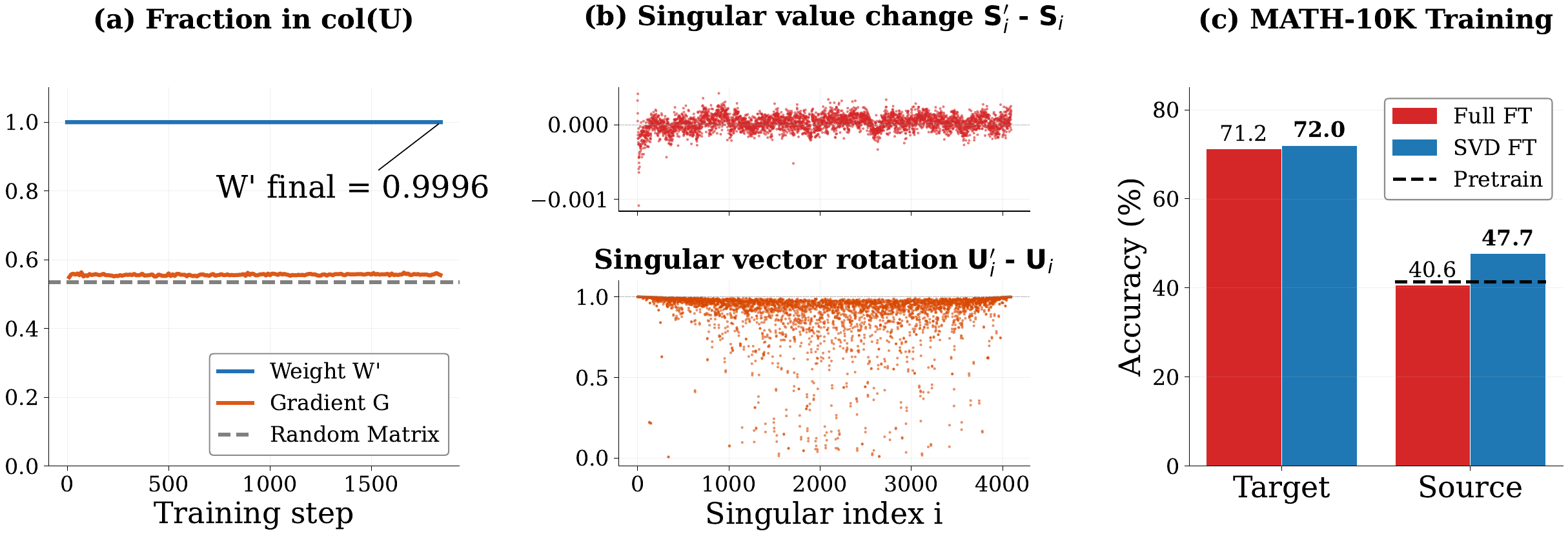}
  \caption{\textbf{(a)}~Gradient lives outside singular basis of pretrained weight, but weight keep stable. \textbf{(b)}~Singular values shift selectively, while singular vectors rotate broadly. (a) (b) demonstrates layer 15 q\_proj result, the full sweeps are in Figures~\ref{fig:motivation_svdft_a_sweep} and~\ref{fig:motivation_svdft_b_sweep} (Appendix). \textbf{(c)}~SVD FT outperforms Full FT on both target domain and source domain.  
  }
  \label{fig:motivation_svdft}
  \vspace{-10pt}
\end{figure}

\paragraph{Gradients are spectrally diffuse, while weight spectrum remain stable.}
Let $\mat{W} = \mat{U}\mat{\Sigma}\mat{V}^\top$ denote the SVD of a pretrained weight. We quantify how much a matrix $\mat{M}$ lies in the pretrained column space $\col(\mat{U})$ using the \emph{column-space energy ratio}:
\begin{equation}
  \rho(\mat{M};\,\mat{U})
  \;=\;
  \frac{\|\mat{U}^\top\mat{M}\|_F^2}{\|\mat{M}\|_F^2}\,.
  \label{eq:col_energy_ratio}
\end{equation}
 We have $\rho\in [0, 1]$ and $\rho=1$ indicates full alignment with $\mathrm{span}(\mat{U})$. Gaussian random matrices has expected energy ratio $\rank(\mat{U})/d_\text{out}$ (see Appendix \ref{sec:appendix:proofs}). Tracking the energy ratio of the gradient $\mat{G}_{\mat{W}}$ during Full FT (Figure~\ref{fig:motivation_svdft}a), we observe that $\rho(\mat{G}_{\mat{W}},\mat{U})$ stays near the random baseline, showing gradients are updating all directions and are not particularly aligned with pretrained directions. In contrast, $\rho(\mat{W}',\mat{U})\approx 1$ throughout training, indicating that the Full FT implicitly preserves the weight’s spectral structure despite diffusive gradient updates. 

\paragraph{Singular values shift selectively; singular vectors rotate broadly.}
Comparing the SVDs of $\mat{W}'$ and $\mat{W}$ (Figure~\ref{fig:motivation_svdft}b), we find that only a few singular values change significantly, while most remain close to their pretrained values. 
In contrast, the singular vectors exhibit substantial rotation: cosine similarities between columns of $\mat{U}'$ and $\mat{U}$ vary widely, with no clear bias toward principal or off-principal directions. 
The above findings suggest that Full FT, despite being unconstrained, implicitly preserves the pretrained column space, concentrating spectral changes on a few singular values while allowing broad directional rotation. This raises a natural question: {\it what if we explicitly enforce these patterns as architectural constraints?} 

\subsection{Insights from SVD Fine-Tuning Experiment}
\label{sec:method:motivation}
We study a controlled SVD Fine-tuning (SVD FT) to validate the above hypothesis. We decompose each pretrained weight via SVD, $\mat{W} = \mat{U}\mat{\Sigma}\mat{V}^\top$, freeze the left-singular basis $\mat{U}$, and fine-tune only $\mat{\Sigma}$ and $\mat{V}^\top$. This SVD FT has identical parameter count and initialization to full FT, differing only in parameterization. It induces a \emph{spectral preconditioning}, where the effective update from $\mat{V}^\top$ is
\begin{equation}
  \Delta\mat{W}\big|_{\mat{V}}
  \;=\; -\eta\;\mat{U}\mat{\Sigma}^2\mat{U}^\top\;\mat{G}_{\mat{W}}\,,
  \label{eq:svdft_preconditioner}
\end{equation}
with $\mat{G}_{\mat{W}} = \partial\mathcal{L}/\partial\mat{W}$. The projection $\mat{U}\mat{U}^\top$ restricts updates to $\col(\mat{U})$, formalizing the column-space preservation observed earlier, while $\mat{\Sigma}$ separates magnitude from direction, aligning with the distinct spectral dynamics.

We evaluate LLaMA-3-8B trained with Full FT and SVD FT on Math-10K, measuring both target domain math reasoning (GSM8K \cite{cobbe2021gsm8k}) and source domain commonsense reasoning performance (Figure~\ref{fig:motivation_svdft}c). We observe that: (1) SVD FT improves target-domain performance, indicating that gradients within $\col(\mat{U})$ suffice for effective learning; (2)  SVD FT improves source-domain generalization, whereas Full FT degrades relative to the pretrained model. 

This experiment suggests that {\bf spectral preconditioning better preserves pretrained structure}, mitigating forgetting and promoting generalizable representations.

\section{The \method\ Framework}
\label{sec:method}

\begin{figure}[t]
  \centering
  \includegraphics[width=\linewidth]{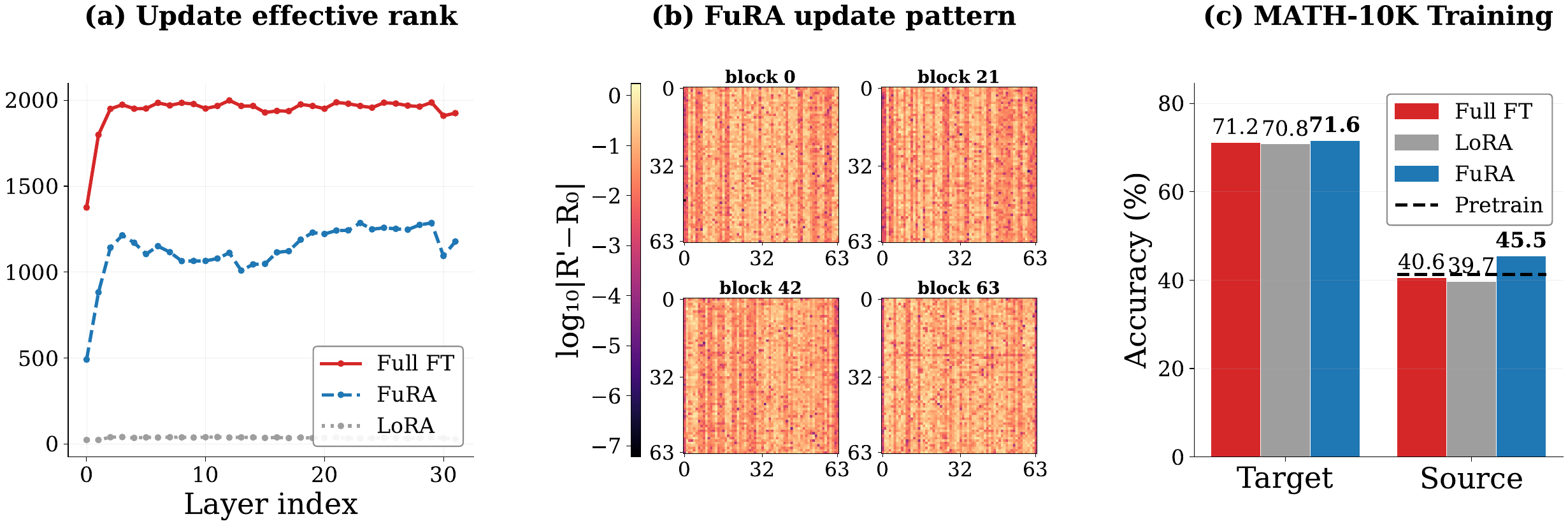}
  \caption{\textbf{(a)}~Effective rank of $\Delta \mat{W}$; full sweep is in Figure~\ref{fig:motivation_fura_a_sweep} (Appendix). \textbf{(b)}~Singular-direction heatmaps show sharper active/inactive separation, reflecting spectral preconditioning from frozen $\mat{L}$. \textbf{(c)}~\method\ better preserves pretrained capabilities. }
  \label{fig:motivation_fura}
  \vspace{-10pt}
\end{figure}

SVD FT demonstrates the benefits of spectral preconditioning but is impractical: it retains a full-size trainable factor and doubles forward-pass cost. 
We propose \method, which addresses both issues using a Block Tensor-Train (BTT) parameterization.

\method reparameterizes each linear layer with a 2-core Block Tensor-Train (BTT) decomposition ~\cite{qiu2024btt}, which can be interpreted as block-wise SVD. The full-rank 2-core BTT can be used to represent arbitrary matrix. By carefully selecting a special block size, we can perform spectral preconditioned update as for the SVD FT, while with a cost on par with LoRA. 

\paragraph{Block tensor-train (BTT) decomposition.}
We focus on 2-core decomposition case.
Given $\mat{W}\in\mathbb{R}^{d_\text{out}\times d_\text{in}}$, choose an $m\times n$ grid such that $d_\text{out}=m a$ and $d_\text{in}=n b$. Each block $\mat{W}_{ij}\in\mathbb{R}^{a\times b}$ is factorized into $\mat{L}_{ij}\in\mathbb{R}^{a\times r}$ and $\mat{R}_{ij}\in\mathbb{R}^{r\times b}$, forming cores $\ten{L}\in\mathbb{R}^{m\times n\times a\times r}$ and $\ten{R}\in\mathbb{R}^{m\times n\times r\times b}$. The matrix-vector product $\mat{y}=\mat{W}\mat{x}$ becomes
\begin{equation}
  y_{\alpha\beta}
  = \sum_{\gamma\sigma} \ten{L}_{\alpha\beta\gamma\sigma}
    \sum_{\delta} \ten{R}_{\sigma\beta\gamma\delta}\, \mat{x}_{\gamma\delta}.
  \label{eq:btt}
\end{equation}
At full rank $r=\min(a,b)$, this BTT is applying full-rank SVD for each individual block and hence the representation is equivalent to the original dense matrix.
For a square, full-rank decomposition
($d_\text{out}=d_\text{in}=d$, $m=n=a=b=r=\sqrt{d}$), the total parameter
count and MVM FLOPs are both $2d^2$~\cite{qiu2024btt}, same as SVD.

\paragraph{The \method\ parameterization.}
We decompose each pretrained weight using a parameter-efficient BTT configuration and keep the diagonal singular values as separate trainable parameters. 
Setting $m=1$ yields blocks with a \emph{large} core $\mat{L}_k\in\mathbb{R}^{d_\text{out}\times r}$ and a \emph{small} core $\mat{R}_k\in\mathbb{R}^{r\times b}$ with $b\ll d_\text{in}$. 
At full rank $r=b$, this decomposition is lossless while only introduces $d_\text{in}b$ additional parameters.
For each block, full-rank SVD yields
$\mat{W}_k=\mat{U}_k\mat{\Sigma}_k\mat{V}_k^\top$, and we set
$\mat{L}_k=\mat{U}_k\!\in\!\mathbb{R}^{d_\text{out}\times b}$,
$\mat{S}_k=\mathrm{diag}(\mat{\Sigma}_k)\!\in\!\mathbb{R}^{b}$,
$\mat{R}_k=\mat{V}_k^\top\!\in\!\mathbb{R}^{b\times b}$.
Stacking blocks, we got BTT cores $\mat{L}=[\mat{L}_1,\dots,\mat{L}_n]$,
$\mat{S}=[\mat{S}_1,\dots,\mat{S}_n]$,
$\mat{R}=[\mat{R}_1,\dots,\mat{R}_n]$.
\method\ \textbf{freezes $\mat{L}$} and trains only $\mat{S}$ and $\mat{R}$. The fine-tuned weight is assembled column-block-wise:
\begin{equation}
  \mat{W}'_k \;=\; \mat{L}_k\,\mathrm{diag}(\mat{S}'_k)\,\mat{R}'_k,
  \qquad
  \mat{W}' \;=\; \bigl[\,\mat{W}'_1 \,\big|\, \mat{W}'_2 \,\big|\, \cdots \,\big|\, \mat{W}'_n\,\bigr],
  \label{eq:fura}
\end{equation}
where $[\cdot|\cdot]$ denotes horizontal (column-block) concatenation. Algorithm~\ref{alg:fura} in Appendix summarizes the procedure. 

\paragraph{Complexity analysis.} The block-SVD initialization is a \textbf{one-time} cost (e.g. $\sim 1$ minute for 8B model). During training, \method\ matches LoRA in both memory and step time (Table~\ref{tab:cost}). The trainable count is $|\mat{R}|+|\mat{S}| = nrb + nr \approx d^{3/2}$, roughly $2\%$ of pretrained weight size at $d=4096$; the $\mat{L}$-stage reduces to a single dense MVM and the $\mat{R}$-stage to a batched BMM, both well supported in deep learning frameworks; after training, the cores merge back into a single dense matrix so there is no additional inference overhead.  We defer the full breakdown to Appendix~\ref{sec:appendix:complexity}.

\begin{table}[t]
\centering
\caption{System overhead comparison. Avg is the average accuracy on LLaMA-3-8B commonsense SFT Detail setups and analysis are in Appendix \ref{sec:appendix:syscost}. }
\label{tab:cost}
\small
\setlength{\tabcolsep}{4pt}
\begin{tabular}{lcccccc}
\toprule
Method & Train.\ (\%) & Extra (\%) & Full-rank $\Delta\mat{W}$? & Step (s) & Memory (GB) & Avg (\%) \\
\midrule
Full FT                 & $100.00$ & $0$    & Yes                  & $0.201$ & $76.9$          & $86.64$ \\
LoRA ($r{=}64$)         & $1.39$   & $1.39$ & No                   & $\textbf{0.043}$ & $26.8$          & $83.46$ \\
DoRA ($r{=}64$)         & $1.42$   & $1.42$ & No                   & $0.045$ & $26.8$          & $85.04$ \\
RandLoRA ($r{=}64$)     & $0.33$   & $2.57$ & No (random)          & $0.059$ & $25.7$          & $54.96$ \\
LIFT ($r{=}32$)         & $6.23$   & $0$    & Yes (sparse)         & $0.144$ & $56.0$          & $87.88$ \\
\textbf{\method\ (ours)}& $1.46$   & $1.46$ & \textbf{Yes (dense)} & $0.046$ & $\mathbf{22.7}$ & $\mathbf{88.01}$ \\
\bottomrule
\end{tabular}
\vspace{-10pt}
\end{table}

\textbf{Advantages of \method.} This single design choice delivers the following properties
\begin{itemize}[leftmargin=*]
    \item \emph{\textbf{Full-rank capacity.}}  Each block’s update lies in $\col(\mat{L}_k)$, and across blocks these subspaces span the full output space, leaving $\Delta\mat{W}$ unconstrained in rank. The optimizer, rather than a preset rank, controls layer capacity.
    As shown in Fig.~\ref{fig:motivation_fura} (a), the effective rank of \method\ update is significantly higher than LoRA. The lower effective rank compared to Full FT suggests that the spectral preconditioner focuses capacity on task-relevant directions rather than distributing it uniformly.
    Importantly, the full-rank block-wise singular basis is preserved, avoiding commitment to specific spectral subspaces. We formalize this in Section~\ref{sec:theory}, Claim~1. 

\item \emph{\textbf{Spectral preconditioning.}} The spectral preconditioning from SVD FT (Section~\ref{sec:method:motivation}, Eq.~\eqref{eq:svdft_preconditioner}) extends naturally to \method's block structure. The trainable $\mat{S}$ decouples magnitude and direction, enabling finer spectral control, similar in spirit to DoRA~\cite{liu2024dora}. As shown in Figure~\ref{fig:motivation_fura} (b), the optimizer selectively updates important singular directions while preserving others. The full preconditioner is derived in Section~\ref{sec:theory}, Claim~2.

\item \emph{\textbf{Better generalization.}} \method\ outperforms Full FT and LoRA on target-domain math reasoning using only ${\sim}2\%$ trainable parameters, while better preserving (improving) source-domain commonsense reasoning accuracy [Fig.~\ref{fig:motivation_fura} (c)]. By restricting updates to the pretrained column space, it favors reweighting existing features over introducing arbitrary directions that may overfit to the fine-tuning distribution.
\end{itemize}

\begin{remark}[Design space]
The \method\ design has two axes: \textbf{(1)} preserve the output subspace ($m=1$, the default) or input subspace ($n=1$); and \textbf{(2)} keep $\mat{S}$ as a separate trainable vector (the default) or merged
into the trainable / frozen core. We characterize each choice's effective
preconditioner in Section~\ref{sec:theory} (Claim~2, Table~\ref{tab:preconditioners})
and ablate them in Section~\ref{sec:exp:ablation}.
\end{remark}

\section{Theoretical Insights}
\label{sec:theory}

In this section, we present two theoretical claims regarding the theoretical properties of \method. We support each claim with a short derivation and provide the full proofs to Appendix~\ref{sec:appendix:proofs}.

\paragraph{Claim 1 (Full-rank update capacity).} Although \method\ trains only $\mat{R}$ and $\mat{S}$, the update $\Delta\mat{W}$ has full-rank capacity: it can reach any rank up to $\min(d_\text{in}, d_\text{out})$ whenever the pretrained weight $\mat{W}$ is full rank, in contrast to LoRA-style adapters that hard-cap $\rank(\Delta\mat{W})\le r$.

\begin{proposition}[No explicit rank cap]
\label{prop:fullrank}
Under the \method\ parameterization in Eq.~\eqref{eq:fura}, with $\mat{L}$ frozen and $\mat{S},\mat{R}$ trainable, the update $\Delta\mat{W} = \mat{W}' - \mat{W}$ is not rank-constrained (Lemma~\ref{lem:rank-formal}).
\end{proposition}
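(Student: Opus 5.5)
The plan is to show that the set of reachable fine-tuned weights $\mat{W}'$ under Eq.~\eqref{eq:fura} contains every matrix of the form $\mat{W}+\mat{D}$ with $\mat{D}$ arbitrary, at least when each block basis $\mat{L}_k=\mat{U}_k$ spans $\mathbb{R}^{d_\text{out}}$ in the relevant sense. Then $\Delta\mat{W}$ can have any rank up to $\min(d_\text{in},d_\text{out})$. The argument is block-wise. Since $\mat{W}'=[\mat{W}'_1|\cdots|\mat{W}'_n]$ is a column-block concatenation, it suffices to characterize the reachable set of each $\mat{W}'_k=\mat{L}_k\,\mathrm{diag}(\mat{S}'_k)\,\mat{R}'_k$ with $\mat{L}_k$ fixed.

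\textbf{Step 1: describe the per-block reachable set.} Because $\mat{R}'_k\in\mathbb{R}^{b\times b}$ is a free dense matrix, the product $\mathrm{diag}(\mat{S}'_k)\mat{R}'_k$ ranges over all $b\times b$ matrices. Take $\mat{S}'_k=\mat{S}_k$, which has nonzero entries when $\mat{W}_k$ has full column rank, and absorb everything into $\mat{R}'_k$. Thus the reachable set is exactly $\{\mat{L}_k\mat{M}:\mat{M}\in\mathbb{R}^{b\times b}\}$, i.e.\ all $d_\text{out}\times b$ matrices whose columns lie in $\col(\mat{L}_k)=\col(\mat{W}_k)$. Consequently $\Delta\mat{W}_k=\mat{L}_k(\mat{M}-\mathrm{diag}(\mat{S}_k)\mat{R}_k)$ is an arbitrary matrix with columns in $\col(\mat{W}_k)$. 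When $b\le d_\text{out}$ and $\mat{W}_k$ has rank $b$, this is a $b$-dimensional column space per block. The $\Delta\mat{W}_k$ can also be chosen independently across blocks.

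\textbf{Step 2: assemble a high-rank update.} We need $\rank(\Delta\mat{W})=\dim\operatorname{span}\bigl(\bigcup_k \col(\Delta\mat{W}_k)\bigr)$ to be achievable up to $\min(d_\text{in},d_\text{out})$. Here we use full rank of $\mat{W}$: $\col(\mat{W})=\sum_k\col(\mat{W}_k)$ has dimension $\min(d_\text{in},d_\text{out})$. We greedily pick a basis of $\col(\mat{W})$ from the union of the block bases $\mat{L}_k$. Then we set $\Delta\mat{W}_k=\mat{L}_k\mat{E}_k$, where $\mat{E}_k$ places each selected basis vector of block $k$ in a distinct column of that block and zeros elsewhere.

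\textbf{Step 3: count ranks.} In the construction of Step~2, each column of $\Delta\mat{W}$ is either zero or one of the selected, linearly independent vectors. Hence $\rank(\Delta\mat{W})$ equals the number of selected vectors, namely $\min(d_\text{in},d_\text{out})$. Every intermediate rank is obtained by dropping selected vectors. For contrast, a LoRA update satisfies $\rank(\mat{B}\mat{A}^\top)\le r$, which gives the claimed distinction.

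\textbf{Main obstacle.} The delicate step is Step~2's per-block accounting. Each block has only $b$ columns, so at most $b$ selected vectors can be placed in block $k$. We must show there is a selection of $\min(d_\text{in},d_\text{out})$ independent vectors that uses at most $b$ from each block. Since block $k$ contributes at most $\rank(\mat{W}_k)\le b$ independent directions anyway, the greedy selection respects this cap automatically.

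A further subtlety is degenerate blocks with zero singular values, or the case $b>d_\text{out}$. In those cases $\mat{L}_k$ from a thin SVD still spans $\col(\mat{W}_k)$, and the argument goes through unchanged. Some entries of $\mat{S}_k$ may vanish, but this is harmless because $\mat{S}'_k$ is trainable and can simply be set to ones.
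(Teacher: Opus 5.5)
Your proposal is correct and follows the paper's argument in Lemma~\ref{lem:rank-formal}: each block update is confined to $\col(\mat{L}_k)$, the blocks are tuned independently, full rank of $\mat{W}$ makes $\sum_k \col(\mat{L}_k)$ have dimension $\min(d_\text{in},d_\text{out})$, and the zero update gives rank $0$. Your greedy selection, which places at most $b$ vectors per block, makes the paper's ``aggregating across all slices'' step explicit, and it also covers $d_\text{out}>d_\text{in}$, a case the paper spells out only for $d_\text{out}\le d_\text{in}$.
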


\textit{Sketch of Proof.} Per block, $\Delta\mat{W}_k$ lies in $\col(\mat{L}_k)$ (Propositions~\ref{prop:block-constraint-col} and~\ref{prop:block-constraint-row}, appendix); across the $n$ blocks, these subspaces collectively span $\col(\mat{W})$.

\begin{remark}
    While Claim~1 establishes full-rank reachability, the rank measure alone does not capture the geometry of the reachable updates. Two further structural properties follow from the per-block construction (Appendix~\ref{sec:appendix:proofs}): (i) the default $m{=}1$ orientation processes the input as $n$ disjoint block slices, with no cross-block information transfer at the BTT layer; and (ii) each per-block update is confined to $\col(\mat{L}_k)$, so $\Delta\mat{W}$ lies in a product of pretrained subspaces.
    The cross-block locality (i) is the primary expressivity cost, and could be relaxed at extra parameters by prepending an $n{\times}n$ input-mixing matrix before the BTT contraction; we leave this extension to future work.
\end{remark}

\paragraph{Claim 2 (Spectral preconditioning).} The effective update under \method\ is not standard SGD on $\mat{W}$: it applies \textit{column-space projection} and \textit{singular-value preconditioning}, both induced by the frozen block-wise SVD basis. We provide the explanations below.

Consider a single block with SVD $\mat{W}_k = \mat{U}_k \mat{\Sigma}_k \mat{V}_k^\top$ and the $k$-th block input $\mat{x}_k$. Let $\mat{g} = \partial \mathcal{L}/\partial \mat{y}$ and $\mat{G}_k = \mat{g}\mat{x}_k^\top$ denote the unconstrained gradient. Under the default \method\ setting (train $\mat{R}$ and $\mat{S}$, freeze $\mat{L}=\mat{U}$), the forward pass is $\mat{y}_k = \mat{U}_k\,\mathrm{diag}(\mat{S}_k)\,\mat{R}_k\,\mat{x}_k$. A single gradient step yields
\begin{equation}
\Delta \mat{W}_k \;=\; \underbrace{-\eta\, \mat{U}_k\, \mathrm{diag}(\mat{S}_k)^2\, \mat{U}_k^\top\, \mat{G}_k}_{\text{from } \mat{R} \text{ update}} \;+\; \underbrace{\mat{U}_k\,\mathrm{diag}(\Delta \mat{S}_k)\,\mat{V}_k^\top}_{\text{from } \mat{S} \text{ update}},
\label{eq:default-precond}
\end{equation}
where $\mathrm{diag}(\mat{S}_k)^2$ arises because $\mathrm{diag}(\mat{S}_k)$ appears twice in the chain rule: in $\partial\mathcal{L}/\partial\mat{R}_k$ and in  $\mat{U}_k\,\mathrm{diag}(\mat{S}_k)\,\delta\mat{R}_k$. The preconditioner $\mat{U}_k\,\mathrm{diag}(\mat{S}_k)^2\,\mat{U}_k^\top$ in the first term combines two effects:
\begin{itemize}[leftmargin=*]
    \item {\bf Column-space projection.} Since $\mat{U}_k$ has orthonormal columns ($\mat{U}_k^\top \mat{U}_k = \mat{I}$, but $\mat{U}_k \mat{U}_k^\top \neq \mat{I}$ for rectangular blocks), the update $\Delta \mat{W}_k$ is restricted to $\mathrm{col}(\mat{U}_k)$, the pretrained left-singular subspace. In \method, $\mat{U}_k$ is always rectangular, making this projection non-trivial.

    \item \textbf{Singular-value preconditioning.} Within $\mathrm{col}(\mat{U}_k)$, gradients are scaled by $\sigma_i^2$ along each singular direction, amplifying high-signal components and suppressing low-signal ones. This induces an adaptive, spectrum-dependent learning rate.
\end{itemize}

\paragraph{Design choices.} \method\ exposes two design axes; Table~\ref{tab:preconditioners} enumerates all corners.

\textbf{(1) Subspace preservation.} Choosing orientation $m{=}1$ with $\mat{L}$ frozen (the default) preserves the per-block pretrained \emph{output} subspace: every block update satisfies $\col(\Delta\mat{W}_k) \subseteq \col(\mat{U}_k)$. The symmetric choice $n{=}1$ with $\mat{R}$ frozen instead preserves the per-block pretrained \emph{input} subspace, $\row(\Delta\mat{W}_k) \subseteq \row(\mat{V}_k^\top)$. Both orientations share the same trainable parameter count and full-rank update capacity (Claim~1), but bias the optimizer toward different pretrained directions.

\textbf{(2) Placement of $\mat{S}$.} Keeping $\mat{S}$ as a separate trainable vector (default) yields the $\mathrm{diag}(\mat{S}_k)^2$ preconditioner of Eq.~\eqref{eq:default-precond} plus the magnitude term $\mat{U}_k\,\mathrm{diag}(\Delta\mat{S}_k)\,\mat{V}_k^\top$. Merging $\mat{S}$ into the trainable core removes spectral reweighting (``fair'' projector $\mat{U}_k\mat{U}_k^\top$, MiLoRA-like); merging it into the frozen core produces a principal-biased $\mathrm{diag}(\mat{S}_k)^2$-weighted projector (PiSSA-like). Derivations for all four placements: Appendix~\ref{sec:appendix:precond}.

\begin{table}[t]
\centering
\footnotesize
\setlength{\tabcolsep}{3pt}
\caption{Effective gradient-induced $\Delta \mat{W}_k$ per design corner of \method\ (up to the learning rate $-\eta$). Principal-biased (PiSSA-like) and off-principal-biased (MiLoRA-like) updates arise as two corners of the same framework. The \textbf{default} configuration used in the main experiments is the last row.}
\label{tab:preconditioners}
\begin{tabular}{llc}
\toprule
Trainable side & $\mat{S}$ placement & Effective $\Delta \mat{W}_k$ \\
\midrule
Train $\mat{L}$ (output), freeze $\mat{R}=\mat{V}_k^\top$ & $\mat{S}$ in trainable $\mat{L}$ & $\mat{g}\mat{x}_k^\top\, \mat{V}_k \mat{V}_k^\top$ \\
Train $\mat{L}$ (output), freeze $\mat{R}=\mathrm{diag}(\mat{S}_k)\mat{V}_k^\top$ & $\mat{S}$ in frozen $\mat{R}$ & $\mat{g}\mat{x}_k^\top\, \mat{V}_k \,\mathrm{diag}(\mat{S}_k)^2\, \mat{V}_k^\top$ \\
Train $\mat{R}$ (input), freeze $\mat{L}=\mat{U}_k$ & $\mat{S}$ in trainable $\mat{R}$ & $\mat{U}_k\mat{U}_k^\top\, \mat{g}\mat{x}_k^\top$ \\
Train $\mat{R}$ (input), freeze $\mat{L}=\mat{U}_k\,\mathrm{diag}(\mat{S}_k)$ & $\mat{S}$ in frozen $\mat{L}$ & $\mat{U}_k \,\mathrm{diag}(\mat{S}_k)^2\, \mat{U}_k^\top\, \mat{g}\mat{x}_k^\top$ \\
\textbf{Train $\mat{R}$, keep $\mat{S}$ separate trainable (default)} & separate, trainable & $\mat{U}_k\,\mathrm{diag}(\mat{S}_k)^2\,\mat{U}_k^\top\, \mat{g}\mat{x}_k^\top + \mat{U}_k\,\mathrm{diag}(\Delta \mat{S}_k)\,\mat{V}_k^\top$ \\
\bottomrule
\end{tabular}
\end{table}


\section{Experiments}
\label{sec:exp}


\paragraph{Setup.} \textbf{Tasks.} We evaluate \method\ on two settings of major interest to the LLM community: 1) \emph{Commonsense reasoning SFT}, in which we fine-tune on Commonsense-170K \cite{hu2023llmadapters} and evaluate on the standard eight-task suite (BoolQ, PIQA, SIQA, HellaSwag, WinoGrande, ARC-e, ARC-c, OBQA) \cite{clark2019boolq,bisk2020piqa,sap2019siqa,zellers2019hellaswag,sakaguchi2020winogrande,clark2018arc,mihaylov2018openbookqa}; 2) \emph{Math reinforcement learning with verifiable rewards}, in which we run GRPO \cite{shao2024deepseekmath} on a math prompt set and evaluate on four held-out benchmarks: MATH-500, AMC23, AIME-24, and AIME-25 \cite{hendrycks2021math, amc23, aime24, aime25}. \textbf{Models.} For commonsense SFT we use LLaMA-2-7B \cite{touvron2023llama2} and LLaMA-3-8B \cite{grattafiori2024llama3}; for math RLVR we use Qwen3-1.7B \cite{qwen3} and Qwen2.5-7B \cite{qwen2025qwen25}. We additionally evaluate 4-bit quantized fine-tuning on LLaMA-3-8B and LLaMA-3-70B. \textbf{Baseline methods.} We compare against Full FT and four families of parameter-efficient methods covering the leading LoRA variants: (i)~\emph{low-rank adapters} LoRA \cite{hu2022lora} and DoRA \cite{liu2024dora}; (ii)~\emph{SVD-initialized adapters} PiSSA \cite{meng2024pissa} and MiLoRA \cite{wang2024milora}; and (iii)~\emph{full-rank capacity} RandLoRA \cite{albert2025randlora};. For all baselines we use the rank and other hyperparameters reported as best in the original references. Full hyperparameters and sweeps are in Appendix~\ref{sec:appendix:exp}.


\subsection{Commonsense reasoning SFT (LLaMA-2-7B and LLaMA-3-8B)}
\label{sec:exp:commonsense}

\begin{table}[t]
\centering
\small
\setlength{\tabcolsep}{3pt}
\caption{Commonsense reasoning, fine-tuned on Commonsense-170K. }
\label{tab:commonsense_sft}
\begin{tabular}{lrrrrrrrrrr}
\toprule
Method & Trainable \% & BoolQ & PIQA & SIQA & HellaSwag & Wino & ARC-e & ARC-c & OBQA & Avg \\
\midrule
\multicolumn{11}{l}{\emph{\textbf{LLaMA-2-7B}}} \\
Full FT                    & 100.00 & 73.8          & 84.2          & 81.0          & 94.7          & 85.2          & 88.9          & 75.6          & 84.8          & 83.53 \\
LoRA ($r{=}64$)            & 3.33   & 70.8          & 82.8          & 79.4          & 92.9          & 83.4          & 86.3          & 71.6          & 82.8          & 81.25 \\
DoRA ($r{=}64$)            & 3.34   & 71.3          & 83.4          & 80.1          & 92.3          & 84.0          & 86.1          & 71.4          & \textbf{85.8} & 81.80 \\
PiSSA ($r{=}128$)          & 3.33   & 72.5          & 85.3          & 80.8          & 87.2          & 86.1          & 87.1          & 74.3          & 85.6          & 82.36 \\
\method\ (ours)            & 1.53   & \textbf{74.6} & \textbf{86.7} & \textbf{82.5} & \textbf{95.2} & \textbf{86.3} & \textbf{89.0} & \textbf{75.6} & 85.4          & \textbf{84.41} \\
\midrule
\multicolumn{11}{l}{\emph{\textbf{LLaMA-3-8B}}} \\
Full FT                    & 100.00 & 75.4          & 88.0          & 81.8          & 96.5          & 89.3          & 93.1          & 83.0          & 86.0          & 86.64 \\
LoRA ($r{=}64$)            & 1.39   & 71.8          & 85.3          & 80.9          & 93.4          & 84.5          & 90.0          & 77.0          & 84.8          & 83.46 \\
DoRA ($r{=}64$)            & 1.42   & 74.6          & 87.4          & 81.2          & 94.7          & 87.1          & 89.4          & 79.5          & 86.4          & 85.04 \\
PiSSA ($r{=}128$)          & 1.39   & 74.9          & 87.7          & 82.3          & 95.3          & 87.7          & 92.9          & 81.4          & 87.0          & 85.90 \\
MiLoRA ($r{=}64$)          & 1.39   & 73.7          & 87.3          & 80.7          & 94.3          & 87.2          & 91.2          & 78.9          & 86.4          & 84.96 \\
RandLoRA ($r{=}64$)        & 0.33   & 73.1          & 85.5          & 79.4          & 93.1          & 85.9          & 87.9          & 75.9          & 85.4          & 83.28 \\
\method\ (ours)            & 1.46   & \textbf{76.5} & \textbf{90.4} & \textbf{83.0} & \textbf{96.8} & \textbf{89.9} & \textbf{93.7} & \textbf{84.5} & \textbf{89.3} & \textbf{88.01} \\
\bottomrule
\end{tabular}
\vspace{-10pt}
\end{table}

Table~\ref{tab:commonsense_sft} reports per-task accuracy on the eight-task commonsense suite for LLaMA-2-7B and LLaMA-3-8B after training on Commonsense-170K. \method\ achieves $88.01\%$ average accuracy on LLaMA-3-8B, outperforming LoRA by $+4.45$, DoRA by $+2.87$, and Full FT by $+1.27$. Methods that initialize LoRA with principal (PiSSA, $+2.44$ over LoRA) or off-principal (MiLoRA, $+1.50$) singular directions of the pretrained weight improve over LoRA; full-rank-capacity RandLoRA is essentially at LoRA parity ($83.28$ vs.\ $83.46$); while none outperforms Full FT. In contrast, \method\ combines full-rank capacity, preservation of the full-rank pretrained spectral structure, and spectral preconditioning in a unified design, enabling it to surpass Full FT while tuning only $1.46\%$ of parameters.

\subsection{Math Reinforcement Learning}
\label{sec:exp:rl}

\textbf{RL setup.} We run GRPO \cite{shao2024deepseekmath} for $50$ policy-gradient steps with $32$ prompts per step and $8$ rollouts per prompt. The reward is $\{0,1\}$ based on symbolic equivalence of the extracted answer. Rollouts are generated with vLLM \cite{kwon2023vllm}; optimization uses AdamW \cite{loshchilov2019adamw}. AIME-24/25 are reported as avg@8 at temperature $T{=}0.6$; MATH-500 and AMC23 use greedy decoding ($T{=}0$). All runs are on a single NVIDIA H100 (94GB) for training.

\begin{table}[t]
\centering
\small
\setlength{\tabcolsep}{6pt}
\caption{Math RL with GRPO on Qwen3-1.7B and Qwen2.5-7B, $50$ policy-gradient steps.}
\label{tab:rl_paper}
\begin{tabular}{llrrrr}
\toprule
Model & Method & MATH-500 & AMC23 & AIME-24 & AIME-25 \\
\midrule
\multirow{8}{*}{Qwen3-1.7B}
 & Base                                & 52.2          & 30.0          &  5.8          &  7.5          \\
 & Full FT                             & 61.5          & 46.7          & 13.2          & 14.6          \\
 & LoRA \cite{hu2022lora}              & 59.6          & 48.3          &  9.2          & 11.1          \\
 & DoRA \cite{liu2024dora}             & 62.3          & 45.0          & 13.6          & 14.7          \\
 & PiSSA \cite{meng2024pissa}          & 49.1          & 30.8          &  2.2          &  4.7          \\
 & MiLoRA \cite{wang2024milora}        & 58.5          & 41.7          &  7.6          & 11.7          \\
 & RandLoRA \cite{albert2025randlora}  & 62.2          & 55.0          & 13.8          & \textbf{18.3} \\
 & \textbf{\method\ (ours)}            & \textbf{62.5} & \textbf{55.8} & \textbf{13.8} & 13.8          \\
\midrule
\multirow{5}{*}{Qwen2.5-7B}
 & Base                                & 48.4          & 40.0          &  5.0          &  2.9          \\
 & Full FT                             & 58.5          & \textbf{49.2} & 11.0          & 4.9           \\
 & LoRA \cite{hu2022lora}              & 58.5          & \textbf{49.2} & 11.5          & 4.3           \\
 & DoRA \cite{liu2024dora} & 59.3          & 47.5          & \textbf{12.3} & 7.5           \\
 & \textbf{\method\ (ours)}            & \textbf{59.7} & 49.0          & 11.8          & \textbf{7.5}  \\
\bottomrule
\end{tabular}
\vspace{-10pt}
\end{table}

\textbf{Results.} Beyond SFT, \method\ also outperforms other PEFT methods and matches Full FT in reinforcement learning with verifiable rewards (RLVR). Table~\ref{tab:rl_paper} reports results on four held-out mathbenchmarks for Qwen3-1.7B and Qwen2.5-7B, all results are averaged over 3 independent seeds (mean$\pm$std are in Appendix~\ref{sec:appendix:rl_seed_sweep}). On \emph{Qwen3-1.7B}, \method\ outperforms Full FT and other PEFT methods on MATH-500, AMC23, AIME-24. PiSSA and MiLoRA underperform in RLVR due to their low-rank commitment to principal/off-principal spaces, which does not align with RLVR training dynamics \cite{zhu2025rlvrpath, yin2025evaluating}. \method\ preserves full-rank spectrum at a similar parameter budget, thus generalizes better in RLVR. On \emph{Qwen2.5-7B}, \method\ also matches or outperforms Full FT.

\subsection{VLM Visual Instruction Tuning (LLaVA-1.5-7B)}
\label{sec:exp:vlm}

We follow the DoRA \cite{liu2024dora} setup to perform visual instruction tuning for LLaVA-1.5-7B \cite{liu2023llava}. Table~\ref{tab:vlm_paper} reports the 7-task average over VQAv2 \cite{goyal2017vqav2}, GQA \cite{hudson2019gqa}, VisWiz \cite{gurari2018vizwiz}, ScienceQA \cite{lu2022scienceqa}, TextVQA \cite{singh2019textvqa}, POPE \cite{li2023pope}, and MMBench \cite{liu2023mmbench}. \method\ exceeds Full FT by $+1.1$ and matches DoRA ($67.6$) with $\mathbf{3.4{\times}}$ fewer trainable parameters ($1.37\%$ vs.\ $4.63\%$). Per-task evaluation results are in Appendix \ref{sec:appendix:vlm}.

\subsection{QFuRA: 4-bit quantized fine-tuning}
\label{sec:exp:qfura}

Following QLoRA \cite{dettmers2023qlora}, we construct QFuRA by quantizing the frozen large core $\mat{L}$ in 4-bit NormalFloat (NF4) while keeping the trainable $\mat{R}$ and $\mat{S}$ in bf16. Tables~\ref{tab:qfura_8b} and~\ref{tab:qfura_70b} report the 8-task Commonsense-170K average on LLaMA-3-8B and GSM8K \cite{cobbe2021gsm8k} accuracy after fine-tuning on MetaMathQA-100K \cite{yu2023metamath} for LLaMA-3-70B. On LLaMA-3-8B, QFuRA reaches $87.30\%$, surpassing QLoRA \cite{dettmers2023qlora} by $+3.41$ and QDoRA \cite{liu2024dora} by $+0.96$ at $1.46\%$ trainable parameters; on LLaMA-3-70B, QFuRA scores $83.78$ on GSM8K, beating QLoRA and QDoRA. The results validate that \method\ is robust to quantizing the frozen large core. Per-task scores are in Appendix~\ref{sec:appendix:qfura}.

\begin{table}[t]
\centering
\small
\begin{minipage}[t]{0.31\linewidth}
\centering
\setlength{\tabcolsep}{4pt}
\caption{Visual instruction tuning of LLaVA-1.5-7B.}
\label{tab:vlm_paper}
\vspace{2pt}
\begin{tabular}{lcc}
\toprule
Method & Params & Avg. \\
\midrule
Full FT                                  & $100\%$           & $66.5$          \\
LoRA \cite{hu2022lora}                  & $4.61\%$          & $66.9$          \\
DoRA \cite{liu2024dora}                 & $4.63\%$          & $67.6$          \\
\textbf{\method\ }    & $1.37\%$ & $\mathbf{67.6}$ \\
\bottomrule
\end{tabular}
\end{minipage}
\hfill
\begin{minipage}[t]{0.31\linewidth}
\centering
\setlength{\tabcolsep}{4pt}
\caption{4-bit quantized fine-tuning on LLaMA-3-8B.}
\label{tab:qfura_8b}
\vspace{2pt}
\begin{tabular}{lcc}
\toprule
Method & Params & Avg. \\
\midrule
QLoRA \cite{dettmers2023qlora}                 & $1.39\%$          & $83.89$          \\
QDoRA \cite{liu2024dora}                & $1.42\%$          & $86.34$          \\
\textbf{QFuRA} & $1.46\%$ & $\mathbf{87.30}$ \\
\bottomrule
\end{tabular}
\end{minipage}
\hfill
\begin{minipage}[t]{0.31\linewidth}
\centering
\setlength{\tabcolsep}{4pt}
\caption{4-bit quantized fine-tuning on LLaMA-3-70B.}
\label{tab:qfura_70b}
\vspace{2pt}
\begin{tabular}{lcc}
\toprule
Method & Params & GSM8K \\
\midrule
QLoRA \cite{dettmers2023qlora}                 & $1.17\%$          & $81.27$          \\
QDoRA \cite{liu2024dora}                 & $1.18\%$          & $81.80$          \\
\textbf{QFuRA} & $1.45\%$ & $\mathbf{83.78}$ \\
\bottomrule
\end{tabular}
\end{minipage}
\vspace{-6pt}
\end{table}

\subsection{Ablations: BlockTT design corners}
\label{sec:exp:ablation}

We ablate \method\ design corners on LLaMA-3-8B Commonsense-170K SFT and Qwen3-1.7B GRPO math RL.  Table~\ref{tab:ablation_peft} compares \method\ Full (all BlockTT factors trainable) and the six \method\ PEFT corners spanning the two design axes of Section~\ref{sec:theory} (Claim~2): block orientation ($m{=}1$ vs.\ $n{=}1$) and placement of $\mat{S}$ (separate trainable, merged into the frozen core, or merged into the trainable core).

Training every factor (\method\ Full) is the best, but the PEFT variant closes nearly the entire gap ($87.91$ vs.\ $88.04$ SFT; matching RL), confirming that the gain comes from full-rank spectral preconditioning rather than additional trainable capacity. Across the six PEFT corners the default preserving output subspace ($m=1$) and keeping singular values as separate parameters ($\fcore{L}\,\tcore{S}\,\tcore{R}$) wins on both SFT and RL. Intuitively, freezing $\mat{L}$ retains the pretrained output feature space while letting the trainable $\mat{R}$ remix based on new input patterns in the fine-tuning data; freezing $\mat{R}$ instead pins the input subspace and could filter out task-specific input features that the fine-tuning data carry. We additionally ablate the BlockTT shape factorization $(n,b)$ of the input dimension and show that balanced factorization achieves better result; results are in Appendix~\ref{sec:appendix:ablations}.

\begin{table}[t]
\centering
\small
\setlength{\tabcolsep}{4pt}
\caption{Ablation study over \method\ design corners. Color code: \fcoretext{blue} = frozen, \tcoretext{red} = trainable.}
\label{tab:ablation_peft}
\begin{tabular}{llcccc}
\toprule
Group & Notation & Train.\ \% & SFT Avg & MATH-500 & AMC23 \\
\midrule
\multirow{2}{*}{Full} & Full FT~~$\tcore{W}$                                  & $100$    & $86.64$          & $63.6$          & $47.5$          \\
& \method\ Full~~$\tcore{L}\,\tcore{S}\,\tcore{R}$      & $100$   & $\mathbf{88.04}$ & \textbf{64.2}      & \textbf{57.5}       \\
\midrule
\multirow{3}{*}{PEFT ($m{=}1$)} & $\fcore{L}\,\tcore{S}\,\tcore{R}$ (default)        & $1.46$ & $\textbf{87.91}$          & $\textbf{63.6}$          & $\textbf{57.5}$          \\
& $\fcore{(LS)}\,\tcore{R}$                          & $1.46$ & $87.12$          & $61.4$          & $55.0$          \\
& $\fcore{L}\,\tcore{(SR)}$                          & $1.46$ & $84.09$          & $62.8$          & $47.5$          \\
\midrule
\multirow{3}{*}{PEFT ($n{=}1$)} & $\tcore{L}\,\tcore{S}\,\fcore{R}$                  & $1.46$ & $86.79$          & $61.2$          & $52.5$          \\
& $\tcore{(LS)}\,\fcore{R}$                          & $1.46$ & $83.30$          & $61.4$          & $42.5$          \\
& $\tcore{L}\,\fcore{(SR)}$                          & $1.46$ & $83.75$          & $62.2$          & $42.5$          \\
\bottomrule
\end{tabular}
\end{table}

\vspace{-5pt}
\section{Conclusion, Limitation, and Broader Impact} \label{sec:conclusion}
\vspace{-5pt}
We have identified spectral preconditioning as a missing ingredient in current fine-tuning, and have proposed \method, which delivers full-rank update capacity, spectral preconditioning, and LoRA-level parameter efficiency from a single architectural choice. We have demonstrated that in both SFT and RLVR \method\ matches or surpasses Full FT with level step time and GPU memory on par with LoRA. Several directions remain open: a deeper theoretical account of \emph{why} spectral-preconditioned updates improve generalization; initializations from decompositions beyond SVD; custom kernels to further accelerate the block tensor-train contraction; and a stronger Q\method\ that exploits the orthonormal rows of the frozen core $\mat{L}$ for more aggressive quantization. We hope these directions inspire further research on the spectral structure of pretrained models and the design of parameter-efficient adaptation. \method\ could reduce computation and energy cost to access high-quality model adaptation and avoid excessive energy spent on tuning LoRA ranks. It inherits the general societal risks of LLM fine-tuning but introduces no new generative capabilities beyond those of the base checkpoints.
\newpage

\bibliographystyle{abbrv}
\bibliography{neurips_2026}

\newpage
\appendix

\section{\method Algorithm}
\label{sec:algorithm}

\begin{algorithm}[H]
\caption{\method: Full-Rank Adaptation}
\label{alg:fura}
\begin{algorithmic}[1]
\REQUIRE Pretrained weight $\mat{W} \in \mathbb{R}^{d_\text{out} \times d_\text{in}}$, block count $n$, block width $b = d_\text{in}/n$
\STATE Reshape $\mat{W}$ into blocks: $\ten{W} \in \mathbb{R}^{n \times d_\text{out} \times b}$
\FOR{$k = 1, \ldots, n$}
  \STATE $\mat{U}_k, \mat{\Sigma}_k, \mat{V}_k \leftarrow \text{SVD}(\ten{W}_k)$ \hfill $\triangleright$ lossless, $r = b$
\ENDFOR
\STATE $\mat{L}_k \leftarrow \mat{U}_k$, \quad $\mat{S}_k \leftarrow \mathrm{diag}(\mat{\Sigma}_k)$, \quad $\mat{R}_k \leftarrow \mat{V}_k^\top$ \hfill $\triangleright$ assign cores
\STATE Freeze $\mat{L}$; set $\mat{R}$ and $\mat{S}$ as trainable
\WHILE{training}
  \STATE Reshape input: $\ten{X} \in \mathbb{R}^{n \times b}$
  \STATE Forward: $\mat{y} = \sum_{k=1}^{n} \mat{L}_k \, \mathrm{diag}(\mat{S}_k) \, \mat{R}_k \, \ten{X}_k$ \hfill $\triangleright$ two batched GEMMs
  \STATE Update $\mat{R}$ and $\mat{S}$ via optimizer (gradient flows through $\mat{L}$ but $\mat{L}$ is not updated)
\ENDWHILE
\STATE \textbf{Deploy:} merge cores into $\mat{W}' \in \mathbb{R}^{d_\text{out} \times d_\text{in}}$ via Eq.~\eqref{eq:fura} \hfill $\triangleright$ no serving overhead
\end{algorithmic}
\end{algorithm}

\section{\method Complexity analysis}
\label{sec:appendix:complexity}

This section expands the brief complexity discussion in Section~\ref{sec:method} (and the headline numbers in Table~\ref{tab:cost}).

\subsection{System cost measurement protocol}
\label{sec:appendix:syscost}

This section documents the protocol used for the step-time and peak-memory numbers in Table~\ref{tab:cost}.

\paragraph{Setup.} LLaMA-3-8B Commonsense-170K SFT, 300 optimizer steps, single H100 NVL ($95.8$ GB), seed $43$. Per-device batch $8$, gradient accumulation $2$ (effective tokens/step $32{,}768$), sequence length $2048$, bf16 mixed precision, gradient checkpointing on, AdamW with $\eta{=}2{\times}10^{-4}$, linear LR schedule with $3\%$ warmup. Identical for every method; only the adapter recipe varies.

\paragraph{Reported metrics.} \emph{Step (s)} is the median wall-clock per optimizer step over the final $200$ steps (after a $100$-step warmup that excludes adapter-attach and SVD initialization). \emph{Memory (GB)} is \texttt{torch.cuda.max\_memory\_allocated} over the full run, including the warmup window..

\subsection{Results and Analysis}

\method\ matches the per-step time of LoRA/DoRA at comparable budgets, achieves the \emph{lowest} peak GPU memory, and uniquely enables full-rank updates. We detail this below.

\paragraph{Trainable parameters.}
The trainable count is $|\mat{R}| + |\mat{S}| = nrb + nr$. For a square $d\times d$ layer with $n=b=r=\sqrt{d}$, this is $\approx d^{3/2}$, i.e., $\mathcal{O}(1/\sqrt{d})$ of $d^2$. At $d=4096$, $\mat{R}$ has ${\sim}0.26$M parameters, and the overall trainable fraction on LLaMA-3-8B is $1.46\%$, comparable to LoRA/DoRA (${\sim}1.4\%$) and far below LIFT's ${\sim}6.2\%$.

\paragraph{Forward pass.}
For a \method\ layer with $d_\text{in} = n \cdot b$, and rank $r = b$, the cores have shapes $\mat{R} \in \mathbb{R}^{n \times r \times b}$, $\mat{L} \in \mathbb{R}^{n \times d_\text{out} \times r}$, and $\mat{S}=[\mat{S}_1,\dots,\mat{S}_n] \in \mathbb{R}^{n \times r}$ with each $\mat{S}_k \in \mathbb{R}^r$ stored as a vector. The forward pass applies, for input $\mat{x} \in \mathbb{R}^{d_\text{in}}$ reshaped as $\ten{X} \in \mathbb{R}^{n \times b}$,
\begin{align*}
\ten{Z}_k &= \mat{R}_k \ten{X}_k & (k = 1,\dots,n),\quad \ten{Z} \in \mathbb{R}^{n \times r}, \\
\widetilde{\ten{Z}}_k &= \mathrm{diag}(\mat{S}_k) \ten{Z}_k, \\
\mat{y} &= \sum_{k=1}^{n} \mat{L}_k \widetilde{\ten{Z}}_k,\quad \mat{y} \in \mathbb{R}^{d_\text{out}}.
\end{align*}

Table~\ref{tab:cost} reports per-step wall-clock, throughput, and peak GPU memory for $300$-step commonsense SFT on LLaMA-3-8B. \method\ runs at $0.046$ s/step, matching LoRA/DoRA and $4.3\times$ faster than Full FT. Peak memory is $22.7$ GB, lower than LoRA/DoRA and $3.4\times$ below Full FT. Efficiency comes from a sequential design: the $\mat{L}$-stage reduces to a standard dense MVM, while the $\mat{R}$-stage uses fused small MVMs via \texttt{torch.bmm}. With $b=\sqrt{d}=64$, these align well with tensor cores, improving utilization over LoRA's tall-skinny GEMMs.

\paragraph{Memory.}
\method\ achieves lower peak memory than LoRA because its sequential factorization $\mat{L}\,\mat{S}\,\mat{R}\,\mat{x}$ produces a single intermediate activation per layer of size $\mathbb{R}^{n \times r}$, whereas LoRA's parallel residual $\mat{W}\mat{x} + \mat{B}\mat{A}\mat{x}$ stashes \emph{two} additional activations: the down-projection $\mat{A}\mat{x} \in \mathbb{R}^{T \times r}$ and the input $\mat{x} \in \mathbb{R}^{T \times d}$ feeding $\mat{B}\mat{A}\mat{x}$. Compared to sparse fine-tuning method LIFT \cite{liu2025lift}, \method\ uses dramatically less memory because LIFT must additionally keep the full bf16 backbone gradient buffer (${\sim}16$ GB on $8$ B params) plus a per-weight bool mask, whereas \method\ produces its full-rank update entirely through the small factored cores $\mat{R}, \mat{S}$ and never materializes a backbone-sized gradient.


\paragraph{Deployment.}
After training, the three cores $\mat{L},\mat{S},\mat{R}$ merge into a single dense matrix $\mat{W}' \in \mathbb{R}^{d_\text{out} \times d_\text{in}}$ via Eq.~\eqref{eq:fura}, incurring no serving latency relative to the base model.

\paragraph{Initialization cost.}
Block-SVD initialization is a one-time cost: $52.9$ s on LLaMA-3-8B (160 target linear layers), versus $\sim$1 s for LoRA/DoRA. This is $<0.2\%$ of full SFT runtime (10 hours for Commonsense-170K finetuning). The cost can be further reduced by Fast SVD methods like \cite{halko2011finding}.

\section{Hyperparameters}
\label{sec:appendix:exp}

\method\ hyperparameters for Tables~\ref{tab:cost},~\ref{tab:commonsense_sft},~\ref{tab:rl_paper},~\ref{tab:qfura_8b}, and~\ref{tab:qfura_70b}.
Tables~\ref{tab:fura-hparams-commonsense}, \ref{tab:fura-hparams-rl}, \ref{tab:fura-hparams-vlm}, and \ref{tab:fura-hparams-qfura-70b} list the \method\ training configuration used for the four task suites. All \method\ runs use the project default corner: $m=1$ and $\mat{S}$ kept as a separate trainable vector. The LLaMA-3-8B QFuRA row of Table~\ref{tab:qfura_8b} uses the same settings as the LLaMA-3-8B column of Table~\ref{tab:fura-hparams-commonsense}, with the frozen large core $\mat{L}$ NF4-quantized via \texttt{bitsandbytes} and the optimizer swapped to \texttt{PagedAdamW8bit}. The LLaMA-3-70B QFuRA row of Table~\ref{tab:qfura_70b} uses Table~\ref{tab:fura-hparams-qfura-70b}.

\begin{table}[H]
\centering
\small
\setlength{\tabcolsep}{6pt}
\caption{\method\ hyperparameter configuration for the commonsense reasoning tasks (Table~\ref{tab:commonsense_sft}).}
\label{tab:fura-hparams-commonsense}
\begin{tabular}{lcc}
\toprule
Hyperparameters (\method) & LLaMA-2-7B & LLaMA-3-8B \\
\midrule
Dropout                                & $0.0$  & $0.0$  \\
Optimizer                              & AdamW  & AdamW  \\
LR                                     & $3{\times}10^{-4}$ & $2{\times}10^{-4}$ \\
LR Scheduler                           & Linear & Linear \\
Weight decay                           & $0$    & $0$    \\
Batch size (per-device $\times$ grad-accum) & $8\times 2 = 16$ & $8\times 2 = 16$ \\
Warmup ratio                           & $0.03$ & $0.03$ \\
Epochs                                 & $3$    & $3$    \\
Max sequence length                    & $2048$ & $2048$ \\
Mixed precision                        & bf16   & bf16   \\
Gradient checkpointing                 & on     & on     \\
Where                                  & \multicolumn{2}{c}{$Q, K, V, O, \text{Up}, \text{Down}, \text{Gate}$} \\
\bottomrule
\end{tabular}
\end{table}

\begin{table}[H]
\centering
\small
\setlength{\tabcolsep}{6pt}
\caption{\method\ hyperparameter configuration for math GRPO RL (Table~\ref{tab:rl_paper}). Reward $\in\{0,1\}$ via \texttt{math\_utils.is\_equiv}; vLLM rollout at \texttt{gpu\_memory\_utilization}$=0.25$; GRPO clip ratio $0.2$, KL coefficient $0$, no entropy bonus.}
\label{tab:fura-hparams-rl}
\begin{tabular}{lcc}
\toprule
Hyperparameters (\method) & Qwen3-1.7B & Qwen2.5-7B \\
\midrule
Optimizer                              & AdamW & AdamW \\
LR                                     & $1{\times}10^{-4}$ & $1{\times}10^{-4}$ \\
LR Scheduler                           & none (constant) & none (constant) \\
Warmup ratio                           & $0.0$ & $0.0$ \\
Weight decay                           & $0$ & $0$ \\
GRPO steps                             & $50$ & $50$ \\
Prompts per step                       & $32$ & $32$ \\
Group size (rollouts / prompt)         & $8$ & $8$ \\
Epochs per step                        & $1$ & $1$ \\
Max model length (vLLM)                & $2048$ & $2048$ \\
Eval max tokens                        & $2048$ & $2048$ \\
Train rollout temperature              & $1.0$ & $1.0$ \\
Eval temperature                       & \multicolumn{2}{c}{$0.6$ (AIME-24/25), $0.0$ (MATH-500/AMC23)} \\
Seed                                   & $42$ & $42$ \\
Where                                  & \multicolumn{2}{c}{All linear projections (\texttt{trainable\_type=all})} \\
\bottomrule
\end{tabular}
\end{table}

\begin{table}[H]
\centering
\small
\setlength{\tabcolsep}{6pt}
\caption{\method\ hyperparameter configuration for visual instruction tuning of LLaVA-1.5-7B on \texttt{llava\_v1\_5\_mix665k} (5{,}197 steps $=1$ epoch). Same recipe as the LLaVA-1.5 DoRA fine-tune of \cite{liu2024dora} with \method\ substituted for DoRA; best LR selected from $\{2,3,4\}{\times}10^{-4}$.}
\label{tab:fura-hparams-vlm}
\begin{tabular}{lc}
\toprule
Hyperparameters (\method) & LLaVA-1.5-7B \\
\midrule
Dropout                                & $0.05$ \\
Optimizer                              & AdamW \\
LR                                     & $3{\times}10^{-4}$ \\
LR Scheduler                           & Cosine decay \\
Weight decay                           & $0$ \\
Batch size (per-device $\times$ grad-accum) & $4\times 4 = 16$ \\
Warmup ratio                           & $0.03$ \\
Epochs                                 & $1$ \\
Model max length                       & $2048$ \\
Mixed precision                        & bf16 \\
Gradient checkpointing                 & on \\
Where                                  & $Q, K, V, O, \text{Up}, \text{Down}, \text{Gate}$ \\
\bottomrule
\end{tabular}
\end{table}

\begin{table}[H]
\centering
\small
\setlength{\tabcolsep}{6pt}
\caption{Math-10K SFT hyperparameter configuration for the LR / batch-size sweep on LLaMA-3-8B (\S\ref{sec:appendix:math10k}). All three methods share the same training schedule (3 epochs on Math-10K, AdamW, linear LR decay, $0.03$ warmup ratio, max sequence length $2048$, bf16); only LR and effective batch size vary across the sweep. Best LRs found per (method, effective batch size) are reported in \S\ref{sec:appendix:math10k}.}
\label{tab:fura-hparams-math10k}
\begin{tabular}{lccc}
\toprule
Hyperparameters & Full FT & LoRA & \method \\
\midrule
Optimizer                                   & \multicolumn{3}{c}{AdamW} \\
LR Scheduler                                & \multicolumn{3}{c}{Linear} \\
Weight decay                                & \multicolumn{3}{c}{$0$}\\
Warmup ratio                                & \multicolumn{3}{c}{$0.03$}\\
Epochs                                      & \multicolumn{3}{c}{$3$}\\
Max sequence length                         & \multicolumn{3}{c}{$2048$} \\
Mixed precision                             & \multicolumn{3}{c}{bf16} \\
Adapter rank / $\alpha$                     & --- & $r{=}64$, $\alpha{=}128$ & $r{=}\text{full}$, $m{=}1$ \\
Effective batch size (sweep)                & $\{16, 64, 256\}$ & $\{16, 64, 256\}$ & $\{16, 64, 256\}$ \\
LR sweep (bsz $=16$)                        & $\{8, 10, 20, 30\}{\times}10^{-6}$ & $\{3, 6, 10, 20\}{\times}10^{-5}$ & $\{2, 3, 4, 6\}{\times}10^{-4}$ \\
Seed                                        & \multicolumn{3}{c}{$43$} \\
Where                                       & \multicolumn{3}{c}{$Q, K, V, O, \text{Up}, \text{Down}, \text{Gate}$} \\
\bottomrule
\end{tabular}
\end{table}

\begin{table}[H]
\centering
\small
\setlength{\tabcolsep}{6pt}
\caption{QFuRA hyperparameter configuration for math fine-tuning of LLaMA-3-70B on MetaMathQA-100K (Table~\ref{tab:qfura_70b}). Mirrors the QPiSSA recipe ($\eta = 2{\times}10^{-5}$, batch $1{\times}128$, sequence length $512$) with the rank-$r$ SVD adapter replaced by QFuRA's quantized BlockTT factorization. The frozen large core $\mat{L}$ is 4-bit-quantized via \texttt{bitsandbytes} storage layout; the trainable cores $\mat{R}$ and $\mat{S}$ stay in bf16.}
\label{tab:fura-hparams-qfura-70b}
\begin{tabular}{lc}
\toprule
Hyperparameters (QFuRA) & LLaMA-3-70B \\
\midrule
Optimizer                                   & PagedAdamW 8-bit \\
LR                                          & $2{\times}10^{-5}$ \\
LR Scheduler                                & Cosine decay \\
Weight decay                                & $0$ \\
Batch size (per-device $\times$ grad-accum) & $1\times 128 = 128$ \\
Warmup ratio                                & $0.03$ \\
Steps                                       & $100$ \\
Max sequence length                         & $512$ \\
Mixed precision                             & bf16 \\
Trainable param dtype                       & bf16 \\
Seed                                        & $42$ \\
Where                                       & $Q, K, V, O, \text{Up}, \text{Down}, \text{Gate}$ \\
\bottomrule
\end{tabular}
\end{table}

\section{Extended experimental results}
\label{sec:appendix:per_task}

This section collects the per-task scores behind the headline averages reported in the body. Subsection~\ref{sec:appendix:rl_seed_sweep} expands Table~\ref{tab:rl_paper} (Math RL with GRPO on Qwen3-1.7B and Qwen2.5-7B) into per-seed mean$\pm$std across 3 seeds. Subsection~\ref{sec:appendix:qfura} expands Table~\ref{tab:qfura_8b} (QFuRA on LLaMA-3-8B Commonsense-170K) to all eight tasks, Subsection~\ref{sec:appendix:vlm} expands Table~\ref{tab:vlm_paper} (\method\ on LLaVA-1.5-7B visual instruction tuning) to all seven vision-language benchmarks, and Subsection~\ref{sec:appendix:math10k} reports the LR and effective-batch-size sweep for LLaMA-3-8B SFT on Math-10K.

\subsection{LLaMA-3-8B Math-10K SFT: LR and batch-size sweep}
\label{sec:appendix:math10k}

We additionally ran an LR and effective-batch-size sweep on LLaMA-3-8B SFT on Math-10K~\citep{liu2025lift}, evaluated on GSM8K~\citep{cobbe2021gsm8k}. Hyperparameters shared across all runs are in Table~\ref{tab:fura-hparams-math10k}. We sweep the learning rate at the default effective batch size of $16$ for each method: Full FT over $\{8{\times}10^{-6},\, 1{\times}10^{-5},\, 2{\times}10^{-5},\, 3{\times}10^{-5}\}$, LoRA over $\{3{\times}10^{-5},\, 6{\times}10^{-5},\, 1{\times}10^{-4},\, 2{\times}10^{-4}\}$, and \method\ over $\{2,\, 3,\, 4,\, 6\}{\times}10^{-4}$. We then sweep the effective batch size in $\{16, 64, 256\}$, picking the best LR per (method, batch size) by the standard square-root-rule rescaling around the small-batch optimum. Best LRs we found are $\eta=1{\times}10^{-5}$ (Full FT, bsz $16$), $\eta=2{\times}10^{-5}$ (Full FT, bsz $64$ and $256$); $\eta=6{\times}10^{-5}$ (LoRA, bsz $16$), $\eta=2{\times}10^{-4}$ (LoRA, bsz $64$), $\eta=6{\times}10^{-4}$ (LoRA, bsz $256$); $\eta=3{\times}10^{-4}$ (\method, bsz $16$), $\eta=6{\times}10^{-4}$ (\method, bsz $64$), $\eta=8{\times}10^{-4}$ (\method, bsz $256$). Figure~\ref{fig:math10k_sweep} plots the resulting GSM8K accuracy across the LR sweep at bsz $=16$ (left) and across the batch-size sweep with best per-batch LR (right).

\begin{figure}[H]
\centering
\begin{minipage}{0.49\linewidth}
\centering
\includegraphics[width=\linewidth]{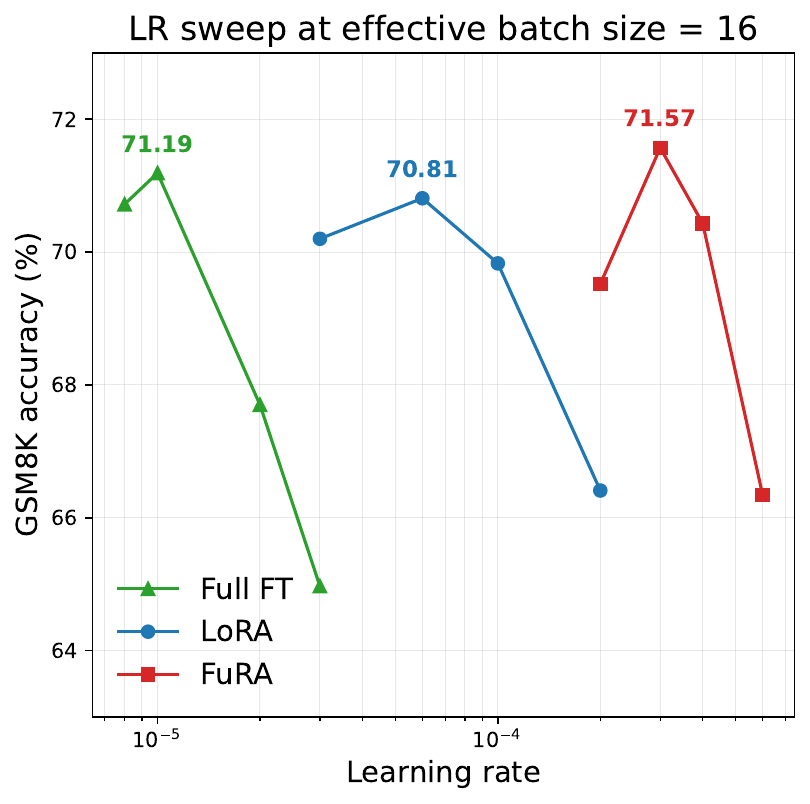}
\end{minipage}\hfill
\begin{minipage}{0.49\linewidth}
\centering
\includegraphics[width=\linewidth]{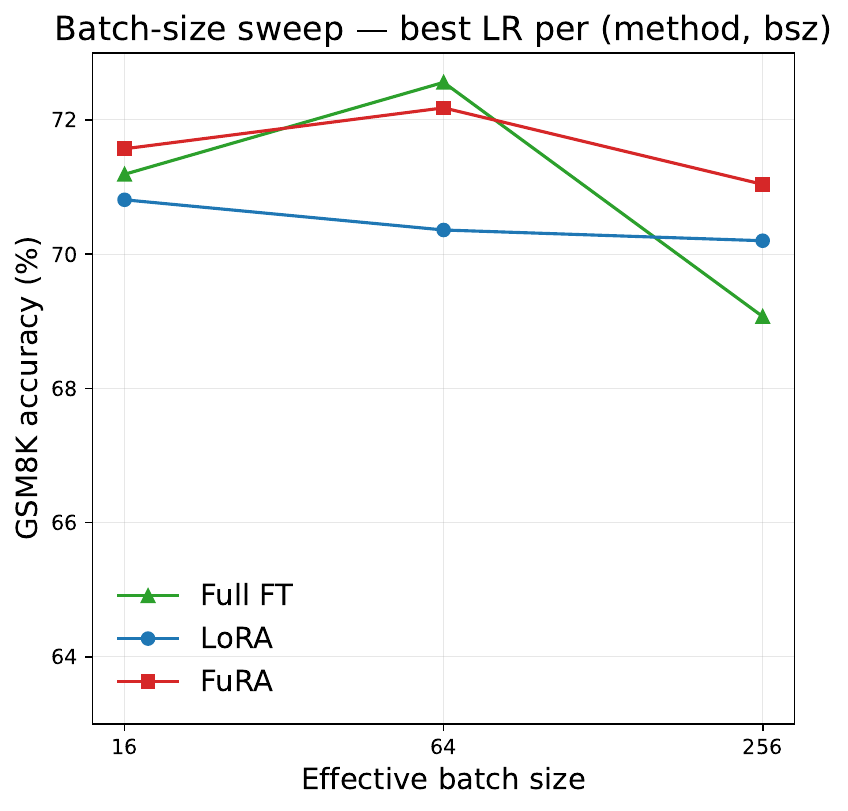}
\end{minipage}
\caption{LLaMA-3-8B Math-10K SFT, GSM8K accuracy. \textbf{Left}: LR sweep at effective batch size $=16$. \textbf{Right}: batch-size sweep. Peak per method is annotated on the left panel.}
\label{fig:math10k_sweep}
\end{figure}

\subsection{LLaMA-2-7B and LLaMA-3-8B Commonsense-170K Results}

\paragraph{Source of the baseline numbers.}
For Tables~\ref{tab:commonsense_sft}, the \method\ result is the averaged result across 3 random seeds, with specified numbers per seed in Table \ref{tab:fura-seed-sweep}.  \method\ result has low variance across different random seeds.      
The LLaMA-2-7B rows and the LLaMA-3-8B Full FT / LoRA / DoRA rows, together with the LIFT comparison numbers are reproduced from the hyperparameter setting in LIFT paper~\citep{liu2025lift}, which itself uses the LLM-Adapters~\citep{hu2023llmadapters} recipe of $r{=}64$ adapters 
on five linear modules \texttt{q\_proj k\_proj v\_proj up\_proj down\_proj}, 
best learning rate for each method, and others aligned with Tables~\ref{tab:fura-hparams-commonsense}. Tuning all 7 modules downgrades the LoRA performance \cite{hu2023llmadapters}. We additionally ran the \emph{RandLoRA, MiLoRA, and PiSSA} baselines on LLaMA-3-8B in-house under the same recipe and 8-task evaluation harness, sweeping the learning rate per method. RandLoRA and MiLoRA peak at $\eta = 1{\times}10^{-4}$; PiSSA peaks at $\eta = 2{\times}10^{-5}$. 

\begin{table}[H]                                                                                                                                                                                                                 
  \centering                                                                                                                                                                                                                       
  \caption{FuRA (PEFT) seed sweep on LLaMA-3-8B fine-tuned on Commonsense-170K (3 epochs).                    }                                                                                                                                                                                       
  \label{tab:fura-seed-sweep}                                                                                                                                                                                                      
  \small                                                                                                                                                                                                                           
  \setlength{\tabcolsep}{4pt}                                                                                                                                                                                                      
  \begin{tabular}{lcccccccc|c}
  \toprule                                                                                                                                                                                                                         
  Seed & BoolQ & PIQA & SIQA & HellaSwag & Wino & ARC-e & ARC-c & OBQA & Avg \\
  \midrule                                                                                                                                                                                                                         
  42       & 76.30 & 90.60 & 83.00 & 96.80 & 90.10 & 93.80 & 85.40 & 89.00 & 88.13 \\
  43       & 76.60 & 89.90 & 83.20 & 96.80 & 89.70 & 93.60 & 84.10 & 89.40 & 87.91 \\                                                                                                                                              
  44       & 76.50 & 90.80 & 82.80 & 96.90 & 89.80 & 93.60 & 84.00 & 89.60 & 88.00 \\                                                                                                                                              
  \midrule                                                                                                                                                                                                                         
  Mean     & 76.47 & 90.43 & 83.00 & 96.83 & 89.87 & 93.67 & 84.50 & 89.33 & 88.01 \\                                                                                                                                              
  Std      &  0.12 &  0.39 &  0.16 &  0.05 &  0.17 &  0.09 &  0.64 &  0.25 &  0.09 \\                                                                                                                                              
  \bottomrule                                                                                                                                                                                                                      
  \end{tabular}                                                                                                                                                                                                                    
  \end{table} 

\subsection{Math RL: per-seed mean$\pm$std across 3 seeds}
\label{sec:appendix:rl_seed_sweep}

This subsection expands Table~\ref{tab:rl_paper} into per-seed mean$\pm$std across three independent seeds (42, 43, 44) for each (model, method) cell. Sample std is taken across $n{=}3$ seeds, so the SEM $\approx 0.6\times$ std. AIME-24/25 are avg@8 at $T{=}0.6$; MATH-500 and AMC23 are greedy@1 at $T{=}0$. The Paper Table (Table~\ref{tab:rl_paper}) reports the per-seed mean from the same set of runs.

\begin{table}[H]
\centering
\small
\setlength{\tabcolsep}{4pt}
\caption{Math RL with GRPO: per-seed mean$\pm$std across 3 seeds (42, 43, 44). The Paper Table (Table~\ref{tab:rl_paper}) reports the per-seed mean of the same runs.}
\label{tab:rl_seed_sweep}
\begin{tabular}{llrrrr}
\toprule
Model & Method & MATH-500 & AMC23 & AIME-24 & AIME-25 \\
\midrule
\multirow{7}{*}{Qwen3-1.7B}
 & Full FT  & $61.53\pm2.10$ & $46.67\pm3.82$  & $13.21\pm0.66$ & $14.58\pm1.10$ \\
 & LoRA     & $59.60\pm0.92$ & $48.33\pm7.64$  & $ 9.15\pm2.90$ & $11.10\pm0.63$ \\
 & DoRA     & $62.33\pm2.58$ & $45.00\pm10.90$ & $13.60\pm4.42$ & $14.73\pm0.24$ \\
 & PiSSA    & $49.07\pm5.56$ & $30.83\pm10.10$ & $ 2.22\pm1.27$ & $ 4.71\pm4.34$ \\
 & MiLoRA   & $58.53\pm0.50$ & $41.67\pm3.82$  & $ 7.63\pm0.86$ & $11.67\pm1.81$ \\
 & RandLoRA & $62.23\pm0.58$ & $55.00\pm2.50$  & $13.83\pm1.03$ & $18.30\pm1.46$ \\
 & \method\ & $62.47\pm1.33$ & $55.83\pm1.44$  & $13.75\pm1.10$ & $13.75\pm1.50$ \\
\midrule
\multirow{4}{*}{Qwen2.5-7B}
 & Full FT             & $58.47\pm2.53$ & $49.17\pm5.77$  & $10.97\pm2.69$ & $4.87\pm1.96$ \\
 & LoRA                & $58.47\pm0.23$ & $49.17\pm10.10$ & $11.52\pm1.22$ & $4.30\pm1.47$ \\
 & DoRA    & $59.30\pm0.42$ & $47.50\pm3.54$  & $12.29\pm2.67$ & $7.49\pm1.16$ \\
 & \method\            & $59.73\pm0.99$ & $49.00\pm2.50$  & $11.81\pm1.20$ & $7.50\pm2.54$ \\
\bottomrule
\end{tabular}
\end{table}

\subsection{QFuRA: per-task scores on Commonsense-170K (LLaMA-3-8B)}
\label{sec:appendix:qfura}

\begin{table}[H]
\centering
\small
\setlength{\tabcolsep}{2pt}
\caption{4-bit QFuRA vs.\ QLoRA, QDoRA, and bf16 full fine-tuning on LLaMA-3-8B Commonsense-170K. \textbf{Bold} = column-best across the 4-bit quantized methods.}
\label{tab:qfura_commonsense}
\begin{tabular}{lrrrrrrrrrr}
\toprule
Method & Trainable \% & BoolQ & PIQA & SIQA & HellaSwag & Wino & ARC-e & ARC-c & OBQA & Avg \\
\midrule
Full FT (bf16)                    & $100.00$        & $75.4$          & $88.0$          & $81.8$          & $96.5$          & $89.3$          & $93.1$          & $83.0$          & $86.0$          & $86.64$ \\
\midrule
4-bit QLoRA ($r{=}64$)             & $2.05$          & $72.7$          & $86.6$          & $80.8$          & $93.7$          & $85.4$          & $89.9$          & $76.2$          & $85.8$          & $83.89$ \\
4-bit QDoRA ($r{=}64$)             & $2.06$          & $\mathbf{75.5}$ & $88.0$          & $81.1$          & $95.9$          & $88.4$          & $91.8$          & $82.0$          & $88.0$          & $86.34$ \\
\textbf{4-bit QFuRA (ours)}        & $\mathbf{1.46}$ & $73.0$          & $\mathbf{89.9}$ & $\mathbf{82.7}$ & $\mathbf{96.6}$ & $\mathbf{89.1}$ & $\mathbf{93.1}$ & $\mathbf{83.4}$ & $\mathbf{90.6}$ & $\mathbf{87.30}$ \\
\bottomrule
\end{tabular}
\end{table}

Table \ref{tab:qfura_8b} (full results Table \ref{tab:qfura_commonsense}) are all our own runs as specified in Tables~\ref{tab:fura-hparams-commonsense}.
QFuRA wins seven of eight per-task columns among the 4-bit methods (all but BoolQ, where QDoRA leads by $2.5$pp), and the gain over QDoRA is concentrated on ARC-c ($+1.4$), ARC-e ($+1.3$), OBQA ($+2.6$), and PIQA ($+1.9$). QFuRA's training and evaluation hyperparameters are documented in Table~\ref{tab:fura-hparams-commonsense} (LLaMA-3-8B column) with the only delta being NF4 quantization of $\mat{L}$ via \texttt{bitsandbytes} and the \texttt{PagedAdamW8bit} optimizer.

\subsection{VLM visual instruction tuning: per-task scores (LLaVA-1.5-7B)}
\label{sec:appendix:vlm}

We follow the exact LLaVA-1.5 \cite{liu2023llava} + DoRA \cite{liu2024dora} fine-tuning recipe: 1 epoch on \texttt{llava\_v1\_5\_mix665k} ($5{,}197$ optimizer steps), per-device batch $4$ with gradient accumulation $4$ (effective batch $16$), sequence length $2048$, bf16, gradient checkpointing on, AdamW, cosine LR schedule with $3\%$ warmup, and \texttt{mm\_projector\_lr}$=2{\times}10^{-5}$. \method\ replaces DoRA on all seven linear projections of the language tower (Q, K, V, O, Gate, Up, Down) under the project default corner (\texttt{output\_one\_block} / small core trainable / \texttt{s\_merged\_to=keep\_trainable} / full rank), with the CLIP-L/336 vision encoder and the MLP projector kept in their LLaVA-1.5 recipe. We swept the \method\ LR over $\{2,3,4\}{\times}10^{-4}$; LR$=3{\times}10^{-4}$ wins the 7-task average and is the row reported below.

\textbf{Evaluation protocol.} We follow the DoRA paper's Table 12 conventions: VQAv2 = test-dev2015 Overall (server eval); GQA = testdev\_balanced; VisWiz = test Overall (server eval); SQA = full Acc on all $21$K questions; VQA$^{\text{T}}$ = TextVQA val Acc; POPE = mean F1 across $\{$random, popular, adversarial$\}{\times}100$; MMBench = dev split, CircularEval. Full FT, LoRA, and DoRA columns are quoted from \cite{liu2024dora} Table 12.

\begin{table}[H]
\centering
\small
\setlength{\tabcolsep}{4pt}
\caption{Per-task LLaVA-1.5-7B visual instruction tuning results on the 7 vision-language benchmarks of \cite{liu2024dora}, Table 12. Full FT / LoRA / DoRA quoted from the DoRA paper; \method\ is our run at LR$=3{\times}10^{-4}$ (best of $\{2,3,4\}{\times}10^{-4}$). \textbf{Bold} = column-best.}
\label{tab:vlm_full}
\begin{tabular}{lccccccccc}
\toprule
Method & \# Params (\%) & VQAv2 & GQA & VisWiz & SQA & VQA$^{\text{T}}$ & POPE & MMBench & Avg. \\
\midrule
Full FT                  & $100$           & $78.5$ & $61.9$ & $50.0$ & $66.8$ & $58.2$ & $85.9$ & $64.3$ & $66.5$ \\
LoRA \cite{hu2022lora}   & $4.61$          & $\mathbf{79.1}$ & $62.9$ & $47.8$ & $68.4$ & $58.2$ & $86.4$ & $\mathbf{66.1}$ & $66.9$ \\
DoRA \cite{liu2024dora}  & $4.63$          & $78.6$ & $\mathbf{62.9}$ & $52.2$ & $\mathbf{69.9}$ & $57.0$ & $\mathbf{87.2}$ & $\mathbf{66.1}$ & $\mathbf{67.6}$ \\
\textbf{\method\ (ours)} & $\mathbf{1.37}$ & $78.7$ & $62.7$ & $\mathbf{54.4}$ & $67.3$ & $\mathbf{58.1}$ & $86.6$ & $64.5$ & $\mathbf{67.6}$ \\
\bottomrule
\end{tabular}
\end{table}

\textbf{Takeaways.} \method\ matches DoRA's 7-task average while training $\mathbf{3.4{\times}}$ fewer parameters ($1.37\%$ vs.\ $4.63\%$ of $7.06$B model parameters). \method\ wins VisWiz ($+2.2$) and TextVQA ($+1.1$), ties POPE within $0.6$, and trails on the reasoning-heavy SQA ($-2.6$) and MMBench ($-1.6$) splits. The wall-clock cost is also lower: at the same effective batch and step count, \method\ trains in $21$h$11$m on a single H100 vs.\ DoRA's $35$h$46$m ($\mathbf{41\%}$ faster).

\textbf{Param count derivation ($1.37\%$).} For each linear with \texttt{in\_features} $=n\cdot b$, the trainable count under the default \method\ corner is $\texttt{btt\_r}+\texttt{btt\_s} = n b^2 + n b = \texttt{in\_features}\cdot(b{+}1)$, while the dense $\texttt{btt\_l}\in\mathbb{R}^{\texttt{in\_features}\times\texttt{out\_features}}$ core stays frozen. With $32$ LLaMA-2-7B blocks $\times\,7$ projections per block, $\texttt{in\_features}\in\{4096, 11008\}$ factor as $(64,64)$ and $(86,128)$ respectively, giving $32\cdot(6\cdot 4096\cdot 65 + 11008\cdot 129)\approx 96.6$M trainable parameters out of LLaVA-1.5-7B's $7.06$B total ($\approx 95.4$M from $\texttt{btt\_r}$ plus $1.1$M from $\texttt{btt\_s}$, with embeddings, lm\_head, norms, the CLIP vision tower, and the MLP projector all kept frozen): $96.6/7060=1.37\%$.

\subsection{Ablation details}
\label{sec:appendix:ablations}

The body of Section~\ref{sec:exp:ablation} reports only the SFT 8-task average and the MATH-500/AMC23 RL columns. Table \ref{tab:ablation_peft_app_rl} below give the corresponding per-task SFT scores and Table \ref{tab:ablation_peft_app_rl} adds the AIME-24/AIME-25 RL columns for both ablation slices. Color code matches the body: \fcoretext{blue} = frozen, \tcoretext{red+underbar} = trainable.

\begin{table}[H]
\centering
\small
\setlength{\tabcolsep}{3pt}
\caption{\method\ PEFT design corners, full SFT per-task results (LLaMA-3-8B Commonsense-170K).}
\label{tab:ablation_peft_app_sft}
\begin{tabular}{llcccccccccc}
\toprule                                                                                                                       Group & Notation & Train.\ \% & BoolQ & PIQA & SIQA & HSwag & Wino & ARCe & ARCc & OBQA & Avg \\
  \midrule                                                                                              
  \multirow{2}{*}{Full} & Full FT~~$\tcore{W}$                              & 100 & 75.40 & 88.00          & 81.80 & 96.50          & 89.30 & 93.10          & 83.00          & 86.00 & 86.64          \\
  & \method\ $\tcore{L}\,\tcore{S}\,\tcore{R}$                         &  100 & 76.10 & \textbf{90.80} & 82.90 & \textbf{96.80} & 89.30 & \textbf{94.40} & \textbf{85.20} & 88.80 & \textbf{88.04} \\    
\midrule
\multirow{3}{*}{PEFT ($m{=}1$)} & $\fcore{L}\,\tcore{S}\,\tcore{R}$ (default)        & $1.46$ & $\mathbf{76.6}$ & $89.9$         & $\mathbf{83.2}$ & $\mathbf{96.8}$ & $\mathbf{89.7}$ & $\mathbf{93.6}$ & $\mathbf{84.1}$ & $\mathbf{89.4}$ & $\mathbf{87.91}$ \\
& $\fcore{(LS)}\,\tcore{R}$                          & $1.46$ & $74.0$         & $\mathbf{90.8}$ & $81.9$         & $95.9$         & $89.4$         & $93.4$         & $84.0$         & $87.6$         & $87.12$         \\
& $\fcore{L}\,\tcore{(SR)}$                          & $1.46$ & $74.8$         & $76.6$         & $82.2$         & $95.4$         & $86.4$         & $90.9$         & $81.6$         & $84.8$         & $84.09$         \\
\midrule
\multirow{3}{*}{PEFT ($n{=}1$)} & $\tcore{L}\,\tcore{S}\,\fcore{R}$                  & $1.46$ & $75.0$         & $89.5$         & $81.9$         & $96.2$         & $88.1$         & $93.2$         & $83.0$         & $87.4$         & $86.79$         \\
& $\tcore{(LS)}\,\fcore{R}$                          & $1.46$ & $67.0$         & $85.6$         & $80.6$         & $93.5$         & $86.3$         & $90.2$         & $78.4$         & $84.8$         & $83.30$         \\
& $\tcore{L}\,\fcore{(SR)}$                          & $1.46$ & $66.3$         & $85.6$         & $79.5$         & $94.7$         & $86.1$         & $90.5$         & $80.5$         & $86.8$         & $83.75$         \\
\bottomrule
\end{tabular}
\end{table}

\begin{table}[H]
\centering
\small
\setlength{\tabcolsep}{4pt}
\caption{\method\ PEFT design corners, full RL benchmarks (Qwen3-1.7B GRPO math).}
\label{tab:ablation_peft_app_rl}
\begin{tabular}{llccccc}
\toprule                                                  
  Group & Notation & Train.\ \% & MATH-500 & AMC23 & AIME-24 & AIME-25 \\
  \midrule                                                                          
  \multirow{2}{*}{Full} & Full FT~~$\tcore{W}$                              & 100 & \textbf{63.6} & 47.5          & 13.8          & 15.4          \\
  & \method\ $\tcore{L}\,\tcore{S}\,\tcore{R}$                         &  100 & 62.4          & \textbf{50.0} & \textbf{17.1} & \textbf{18.3} \\
\midrule
\multirow{3}{*}{PEFT ($m{=}1$)} & $\fcore{L}\,\tcore{S}\,\tcore{R}$ (default)        & $1.46$ & $\mathbf{63.6}$ & $\mathbf{57.5}$ & $\mathbf{15.0}$ & $\mathbf{17.5}$ \\
& $\fcore{(LS)}\,\tcore{R}$                          & $1.46$ & $61.4$         & $55.0$         & $10.0$         & $12.9$         \\
& $\fcore{L}\,\tcore{(SR)}$                          & $1.46$ & $62.8$         & $47.5$         & $12.5$         & $16.7$         \\
\midrule
\multirow{3}{*}{PEFT ($n{=}1$)} & $\tcore{L}\,\tcore{S}\,\fcore{R}$                  & $1.46$ & $61.2$         & $52.5$         & $9.6$          & $13.8$         \\
& $\tcore{(LS)}\,\fcore{R}$                          & $1.46$ & $61.4$         & $42.5$         & $12.9$         & $13.8$         \\
& $\tcore{L}\,\fcore{(SR)}$                          & $1.46$ & $62.2$         & $42.5$         & $9.6$          & $8.3$          \\
\bottomrule
\end{tabular}
\end{table}

\subsubsection*{Shape factorization ablation}
\label{sec:appendix:ablation_shape}

Holding the s-placement recipe fixed at the headline default $\fcore{L}\,\tcore{S}\,\tcore{R}$, this sub-ablation varies how the input dimension $d_{\text{in}}$ is split into block sizes $(n, b)$ with $n\cdot b = d_{\text{in}}$. We evaluate three regimes on LLaMA-3-8B Commonsense-170K (1 epoch, $\eta=2{\times}10^{-4}$, seed 43) covering the full balanced-to-extreme axis.

\begin{table}[H]
\centering
\small
\setlength{\tabcolsep}{3pt}
\caption{Shape factorization ablation: how the input dimension $d_{\text{in}}=n\cdot b$ is split into BlockTT block sizes. LLaMA-3-8B, Commonsense-170K, 1 epoch, $\eta=2{\times}10^{-4}$, seed 43, default s-placement $\fcore{L}\,\tcore{S}\,\tcore{R}$. \textbf{Bold} = column-best.}
\label{tab:ablation_shape}
\begin{tabular}{lcccccccccc}
\toprule
Shape variant & Train.\ \% & BoolQ & PIQA & SIQA & HSwag & Wino & ARCe & ARCc & OBQA & Avg \\
\midrule
Default ($b{\approx}\sqrt{d_{\text{in}}}$) & $1.46$ & $\mathbf{75.70}$ & $\mathbf{90.80}$ & $\mathbf{83.80}$ & $\mathbf{96.80}$ & $\mathbf{88.70}$ & $\mathbf{93.70}$ & $\mathbf{84.60}$ & $\mathbf{90.40}$ & $\mathbf{88.06}$ \\
Unbalanced ($b{=}8$)                                    & $0.03$ & $68.00$          & $86.20$          & $74.10$          & $91.20$          & $75.70$          & $90.20$          & $77.10$          & $77.40$          & $79.99$          \\
Extreme ($b{=}1$, $n{=}d_{\text{in}}$)                                     & $0.14$ & $35.80$          & $47.70$          & $32.90$          & $24.50$          & $47.40$          & $25.30$          & $24.20$          & $27.60$          & $33.18$          \\
\bottomrule
\end{tabular}
\end{table}

Concretely on LLaMA-3-8B: Default uses $(n{=}32, b{=}128)$ for $Q/K/V/O$ (head-aligned, $n{=}\text{num\_heads}$, $b{=}\text{head\_dim}$), $(n{=}64, b{=}64)$ for Gate/Up, and $(n{=}112, b{=}128)$ for Down (closest factor pair). Unbalanced uses $(n{=}512, b{=}8)$ for $Q/K/V/O$ and Gate/Up, and $(n{=}1792, b{=}8)$ for Down. Extreme uses $b{=}1, n{=}d_{\text{in}}$ on every module.

A more balanced shape factorization is preferred: pushing $b$ toward $1$ (large $n$) hurts model quality: Avg drops from $88.06$ (balanced) to $79.99$ ($b{=}8$) to $33.18$ ($b{=}1$, near chance level on most tasks). Under $\fcore{L}\,\tcore{S}\,\tcore{R}$ the trainable count is dominated by $|\mat{R}|=r\cdot d_{\text{in}}$ with effective rank $r=\min(a,b)$; shrinking $b$ shrinks both the achievable rank and the small-core size, so very small $b$ is a double penalty on representational capacity \emph{and} trainable budget. The Extreme row ends up with more trainable parameters than the Unbalanced row (\,$0.14\%$ vs.\ $0.03\%$\,) because the separately-held $\tcore{S}$ vector dominates the parameter count once $r{=}1$, yet model quality is worse.

\section{Extended proof}
\subsection{Properties of the energy ratio \eqref{eq:col_energy_ratio}}
In the following lemma, we summarize the properties of the energy ratio \eqref{eq:col_energy_ratio} to support our observations on the spectral analysis for the Full FT.
\begin{lemma}[Properties of the Energy Ratio]\label{lem:properties_energy_ratio}
Let $\mat{U} \in \mathbb{R}^{d_{\mathrm{out}} \times k}$ have orthonormal columns. Then:
\begin{enumerate}
    \item[\textbf{(i)}] \textbf{(Boundedness and geometric interpretation).} For any $\mat{M} \in \mathbb{R}^{d_{\mathrm{out}} \times d_{\mathrm{in}}}$ with $\|\mat{M}\|_F > 0$,
    \[
        0 \;\leq\; \rho(\mat{M};\, \mat{U}) \;\leq\; 1\,.
    \]
    Moreover:
    \begin{itemize}
        \item $\rho(\mat{M}; \mat{U}) = 1$ if and only if $\mathrm{col}(\mat{M}) \subseteq \mathrm{col}(\mat{U})$, i.e., every column of $\mat{M}$ lies in the column space of $\mat{U}$.
        \item $\rho(\mat{M}; \mat{U}) = 0$ if and only if $\mathrm{col}(\mat{M}) \perp \mathrm{col}(\mat{U})$, i.e., every column of $\mat{M}$ is orthogonal to the column space of $\mat{U}$.
        \item $\rho(\mat{M}; \mat{U}) \in (0, 1)$ if and only if $\mat{M}$ spreads its energy between $\mathrm{col}(\mat{U})$ and its orthogonal complement.
    \end{itemize}

    \item[\textbf{(ii)}] \textbf{(Gaussian baseline).} If $\mat{G} \in \mathbb{R}^{d_{\mathrm{out}} \times d_{\mathrm{in}}}$ has i.i.d.\ entries with zero mean and variance $\sigma^2 > 0$, then
    \begin{equation}
        \mathbb{E}\bigl[\rho(\mat{G};\, \mat{U})\bigr] \;=\; \frac{k}{d_{\mathrm{out}}}\,.
    \end{equation}
\end{enumerate}
\end{lemma}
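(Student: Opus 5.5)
The whole argument rests on one orthogonal decomposition. Write $\mat{P} = \mat{U}\mat{U}^\top$ for the orthogonal projector onto $\col(\mat{U})$, and let $\mat{P}^\perp = \mat{I} - \mat{P}$. Because $\mat{U}^\top\mat{U} = \mat{I}_k$, we have
\[
\|\mat{U}^\top\mat{M}\|_F^2 = \mathrm{tr}(\mat{M}^\top\mat{U}\mat{U}^\top\mat{M}) = \|\mat{P}\mat{M}\|_F^2 .
\]
Pythagoras then gives $\|\mat{M}\|_F^2 = \|\mat{P}\mat{M}\|_F^2 + \|\mat{P}^\perp\mat{M}\|_F^2$, since $\mathrm{tr}(\mat{M}^\top\mat{P}\mat{P}^\perp\mat{M}) = 0$. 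It follows that
\[
\rho(\mat{M};\mat{U}) = \frac{\|\mat{P}\mat{M}\|_F^2}{\|\mat{P}\mat{M}\|_F^2 + \|\mat{P}^\perp\mat{M}\|_F^2},
\]
which is a ratio of a nonnegative number to itself plus another nonnegative number. This gives $0 \le \rho \le 1$ immediately, using $\|\mat{M}\|_F > 0$.

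For the characterizations in (i), I would work column by column. The condition $\rho = 1$ holds iff $\|\mat{P}^\perp\mat{M}\|_F = 0$, iff $\mat{P}^\perp \mat{m}_j = 0$ for every column $\mat{m}_j$, iff every $\mat{m}_j \in \col(\mat{U})$, i.e.\ $\col(\mat{M}) \subseteq \col(\mat{U})$. Likewise, $\rho = 0$ iff $\mat{P}\mat{M} = 0$, iff $\mat{U}^\top\mat{m}_j = 0$ for all $j$, i.e.\ every column is orthogonal to $\col(\mat{U})$. The third bullet is just the complement of the first two: $\rho \in (0,1)$ exactly when both $\|\mat{P}\mat{M}\|_F$ and $\|\mat{P}^\perp\mat{M}\|_F$ are nonzero. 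That is the precise meaning of ``spreads its energy.''

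For (ii), the denominator is random, so $\mathbb{E}[\rho]$ is the expectation of a ratio. This is the step I expect to be the main obstacle, because with only i.i.d.\ zero mean and variance $\sigma^2$, the ratio of expectations $\mathbb{E}\|\mat{U}^\top\mat{G}\|_F^2/\mathbb{E}\|\mat{G}\|_F^2 = k\sigma^2 d_{\mathrm{in}}/(d_{\mathrm{out}} d_{\mathrm{in}}\sigma^2)$ equals $k/d_{\mathrm{out}}$, but the expectation of the ratio need not. I would therefore prove the exact statement for Gaussian entries, matching the lemma's name and the main text's ``Gaussian random matrices,'' and note that for general distributions only the ratio-of-expectations version holds.

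The Gaussian argument uses rotation invariance. Complete $\mat{U}$ to an orthogonal $\mat{Q} = [\mat{U}, \mat{U}_\perp]$. Then $\mat{H} = \mat{Q}^\top\mat{G}$ has the same i.i.d.\ $\mathcal{N}(0,\sigma^2)$ law as $\mat{G}$, and $\|\mat{H}\|_F = \|\mat{G}\|_F$. Hence
\[
\rho = \frac{\sum_{i\le k}\sum_j H_{ij}^2}{\sum_{i\le d_{\mathrm{out}}}\sum_j H_{ij}^2}.
\]
Define $Y_i = \sum_j H_{ij}^2/\|\mat{H}\|_F^2$ for each row $i$. These are exchangeable in $i$, and they sum to $1$ almost surely, since $\|\mat{G}\|_F > 0$ a.s. Therefore $\mathbb{E}[Y_i] = 1/d_{\mathrm{out}}$ for every $i$, and $\mathbb{E}[\rho] = \sum_{i\le k}\mathbb{E}[Y_i] = k/d_{\mathrm{out}}$. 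Equivalently, $\rho$ is distributed as $\mathrm{Beta}(k d_{\mathrm{in}}/2, (d_{\mathrm{out}}-k)d_{\mathrm{in}}/2)$, whose mean is again $k/d_{\mathrm{out}}$.

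The exchangeability argument itself only needs the entries to be i.i.d. The real use of Gaussianity is the rotation step, which is what lets us replace $\mat{U}$ by coordinate rows.
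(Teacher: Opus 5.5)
Your proof is correct and follows the paper's route. For (i) both use the Pythagorean split $\|\mat{M}\|_F^2=\|\mat{U}^\top\mat{M}\|_F^2+\|(\mat{I}-\mat{P})\mat{M}\|_F^2$ with column-wise characterizations, and for (ii) both rotate by a completed orthogonal basis $[\mat{U}\mid\mat{U}_\perp]$ and then use exchangeability; you apply it row-wise, the paper entry-wise over i.i.d.\ $\chi^2(1)$ variables. Your caveat is well founded: the paper's proof also invokes Gaussianity at the rotation step although the statement assumes only zero mean and variance $\sigma^2$, and the identity genuinely fails for skewed entries (e.g.\ $d_{\mathrm{out}}=2$, $k=d_{\mathrm{in}}=1$, $\mat{U}=(1,1)^\top/\sqrt{2}$, entries $2$ or $-1$ with probabilities $1/3,2/3$ give $\mathbb{E}[\rho]=0.6$), though sign-symmetric entries would already suffice, since flipping one row's sign kills the off-diagonal terms of $\mathbb{E}[\mat{G}\mat{G}^\top/\|\mat{G}\|_F^2]$.
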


\begin{proof}
\textbf{Part (i).}

Write $\mat{M} = [\mat{m}_1, \ldots, \mat{m}_{d_{\mathrm{in}}}]$ column-wise. Since $\mat{U}$ has orthonormal columns, $\mat{P} = \mat{U}\mat{U}^\top$ is the orthogonal projector onto $\mathrm{col}(\mat{U})$. Note that $\|\mat{U}^\top \mat{m}_j\|^2 = \|\mat{P}\mat{m}_j\|^2$ for each column. We decompose each column as
\[
    \mat{m}_j = \mat{P}\mat{m}_j + (\mat{I} - \mat{P})\mat{m}_j\,,
\]
where the two components are orthogonal. By the Pythagorean theorem:
\[
    \|\mat{m}_j\|^2 = \|\mat{P}\mat{m}_j\|^2 + \|(\mat{I}-\mat{P})\mat{m}_j\|^2\,.
\]
Summing over columns:
\[
    \|\mat{M}\|_F^2 = \|\mat{U}^\top \mat{M}\|_F^2 + \|(\mat{I}-\mat{P})\mat{M}\|_F^2\,.
\]
Since both terms are non-negative and their sum equals $\|\mat{M}\|_F^2 > 0$, we have $0 \leq \|\mat{U}^\top \mat{M}\|_F^2 \leq \|\mat{M}\|_F^2$, giving $\rho(\mat{M}; \mat{U}) \in [0, 1]$.

\emph{Case $\rho = 1$:} $\rho(\mat{M}; \mat{U}) = 1$ iff $\|(\mat{I}-\mat{P})\mat{M}\|_F^2 = 0$ iff $(\mat{I}-\mat{P})\mat{m}_j = \mat{0}$ for all $j$ iff $\mat{m}_j \in \mathrm{col}(\mat{U})$ for all $j$ iff $\mathrm{col}(\mat{M}) \subseteq \mathrm{col}(\mat{U})$.

\emph{Case $\rho = 0$:} $\rho(\mat{M}; \mat{U}) = 0$ iff $\|\mat{U}^\top \mat{M}\|_F^2 = 0$ iff $\mat{U}^\top \mat{m}_j = \mat{0}$ for all $j$ iff $\mat{m}_j \perp \mathrm{col}(\mat{U})$ for all $j$ iff $\mathrm{col}(\mat{M}) \perp \mathrm{col}(\mat{U})$.

\emph{Case $\rho \in (0,1)$:} This is the complement of the above two cases: $\mat{M}$ has nonzero projection onto both $\mathrm{col}(\mat{U})$ and $\mathrm{col}(\mat{U})^\perp$, meaning its energy is distributed between the two subspaces.

\bigskip
\noindent\textbf{Part (ii).}

Let $\mat{G}$ have i.i.d.\ entries with mean zero and variance $\sigma^2$. Denote the columns of $\mat{U}$ by $\mat{u}_1, \ldots, \mat{u}_k$.

\emph{Numerator:}
\begin{align}
    \mathbb{E}\bigl[\|\mat{U}^\top \mat{G}\|_F^2\bigr]
    &= \sum_{j=1}^{d_{\mathrm{in}}} \sum_{i=1}^{k} \mathbb{E}\bigl[(\mat{u}_i^\top \mat{g}_j)^2\bigr]\,.
\end{align}
Each $\mat{u}_i^\top \mat{g}_j = \sum_{\ell=1}^{d_{\mathrm{out}}} (\mat{u}_i)_\ell\, G_{\ell j}$ is a linear combination of independent zero-mean entries, so
\[
    \mathbb{E}[(\mat{u}_i^\top \mat{g}_j)^2] = \mathrm{Var}(\mat{u}_i^\top \mat{g}_j) = \sigma^2 \sum_{\ell=1}^{d_{\mathrm{out}}} (\mat{u}_i)_\ell^2 = \sigma^2 \|\mat{u}_i\|^2 = \sigma^2\,.
\]
Therefore $\mathbb{E}[\|\mat{U}^\top \mat{G}\|_F^2] = k \cdot d_{\mathrm{in}} \cdot \sigma^2$.

\emph{Denominator:}
\[
    \mathbb{E}\bigl[\|\mat{G}\|_F^2\bigr] = d_{\mathrm{out}} \cdot d_{\mathrm{in}} \cdot \sigma^2\,.
\]

\emph{Computing $\mathbb{E}[\rho(\mat{G}; \mat{U})]$:}
Let $\mat{P} = \mat{U}\mat{U}^\top$ as in Part~(i). By the Pythagorean decomposition established there:
\[
    \|\mat{G}\|_F^2 = \|\mat{U}^\top \mat{G}\|_F^2 + \|(\mat{I} - \mat{P})\mat{G}\|_F^2\,.
\]
Therefore
\[
    \rho(\mat{G}; \mat{U}) = \frac{\|\mat{U}^\top \mat{G}\|_F^2}{\|\mat{U}^\top \mat{G}\|_F^2 + \|(\mat{I} - \mat{P})\mat{G}\|_F^2}\,.
\]
Extend $\mat{U}$ to a full orthonormal basis $\bar{\mat{U}} = [\mat{U} \mid \mat{U}_\perp] \in \mathbb{R}^{d_{\mathrm{out}} \times d_{\mathrm{out}}}$, where $\mat{U}_\perp \in \mathbb{R}^{d_{\mathrm{out}} \times (d_{\mathrm{out}} - k)}$. Define $\mat{H} = \bar{\mat{U}}^\top \mat{G}$. Since $\bar{\mat{U}}$ is orthogonal and $\mat{G}$ has i.i.d.\ $\mathcal{N}(0, \sigma^2)$ entries, $\mat{H}$ also has i.i.d.\ $\mathcal{N}(0, \sigma^2)$ entries (orthogonal transformations preserve the i.i.d.\ Gaussian distribution). The first $k$ rows of $\mat{H}$ equal $\mat{U}^\top \mat{G}$ and the remaining $d_{\mathrm{out}} - k$ rows equal $\mat{U}_\perp^\top \mat{G} = (\mat{I} - \mat{P})\mat{G}$ expressed in the $\mat{U}_\perp$ basis. Thus:
\[
    \|\mat{U}^\top \mat{G}\|_F^2 = \sum_{i=1}^{k} \sum_{j=1}^{d_{\mathrm{in}}} H_{ij}^2\,, \qquad
    \|\mat{G}\|_F^2 = \sum_{i=1}^{d_{\mathrm{out}}} \sum_{j=1}^{d_{\mathrm{in}}} H_{ij}^2\,.
\]
Let $X_\ell = H_\ell^2 / \sigma^2$ for each entry $\ell$. All $X_\ell$ are i.i.d.\ $\chi^2(1)$. Then:
\[
    \rho(\mat{G}; \mat{U}) = \frac{\sum_{\ell=1}^{k \cdot d_{\mathrm{in}}} X_\ell}{\sum_{\ell=1}^{d_{\mathrm{out}} \cdot d_{\mathrm{in}}} X_\ell}\,.
\]
The numerator is a partial sum of the denominator, and all terms are i.i.d. By symmetry, each term contributes equally in expectation to the total sum. Therefore:
\[
    \mathbb{E}\!\left[\frac{\sum_{\ell=1}^{k \cdot d_{\mathrm{in}}} X_\ell}{\sum_{\ell=1}^{d_{\mathrm{out}} \cdot d_{\mathrm{in}}} X_\ell}\right] = \frac{k \cdot d_{\mathrm{in}}}{d_{\mathrm{out}} \cdot d_{\mathrm{in}}} = \frac{k}{d_{\mathrm{out}}}\,,
\]
where the first equality follows from the exchangeability of the $\{X_\ell\}$: for i.i.d.\ positive random variables $X_1, \ldots, X_N$, the expectation $\mathbb{E}[X_1 / (X_1 + \cdots + X_N)] = 1/N$ by symmetry, so $\mathbb{E}[(\sum_{\ell=1}^n X_\ell) / (\sum_{\ell=1}^N X_\ell)] = n/N$.
\end{proof}
\subsection{Claim 1: Full-rank Update Capacity}\label{sec:appendix:proofs}

We follow \cite{zhang2024spectral} to define the rank capacity of an weight update $\Delta W=f_{\theta}(W)$ as
\begin{equation}
    \mathcal{R}(f_{\theta}; W) := \max_{\theta} \,\mathrm{rank}\!\left(f_{\theta}(W)\right) \;-\; \min_{\theta} \,\mathrm{rank}\!\left(f_{\theta}(W)\right)
\end{equation}

which describes the range of matrix ranks the weight update can achieve.

\begin{lemma}[Rank capacity of slice-wise spectral-core adaptation]
\label{lem:rank-formal}
Let $\mat{W} \in \mathbb{R}^{m\times n}$ be a full-rank pretrained weight matrix with $\rank(\mat{W})=\min(m,n)$.

\textbf{(Column-sliced form).} Suppose $n=n_0 n_1$, and write
\[
\mat{W} = [\mat{W}_1,\dots,\mat{W}_{n_1}], \quad \mat{W}_i \in \mathbb{R}^{m\times n_0}.
\]
For each slice, consider a full-rank factorization
\[
\mat{W}_i = \mat{P}_i \mat{Q}_i,
\quad
\mat{P}_i \in \mathbb{R}^{m\times r_c},\;
\mat{Q}_i \in \mathbb{R}^{r_c\times n_0},
\quad
r_c = \min(m,n_0).
\]
Fix $\mat{P}_i$ and tune $\{\mat{Q}_i\}$. Define
\[
f_{\theta}(\mat{W})
=
[\mat{P}_1 \widetilde{\mat{Q}}_1,\dots,\mat{P}_{n_1} \widetilde{\mat{Q}}_{n_1}].
\]
Then
\[
\mathcal{R}(f_\theta;\mat{W})
=
\rank([\mat{P}_1,\dots,\mat{P}_{n_1}]).
\]
In particular, if $m \le n$, then $\mathcal{R}(f_\theta;\mat{W}) = m$.

\textbf{(Row-sliced form).} Suppose $m=m_0 m_1$, and write
\[
\mat{W} =
\begin{bmatrix}
\mat{W}_1 \\ \vdots \\ \mat{W}_{m_1}
\end{bmatrix},
\quad
\mat{W}_j \in \mathbb{R}^{m_0\times n}.
\]
For each slice, consider a full-rank factorization
\[
\mat{W}_j = \mat{U}_j \mat{V}_j,
\quad
\mat{U}_j \in \mathbb{R}^{m_0\times r_r},\;
\mat{V}_j \in \mathbb{R}^{r_r\times n},
\quad
r_r = \min(m_0,n).
\]
Fix $\mat{V}_j$ and tune $\{\mat{U}_j\}$. Define
\[
g_{\phi}(\mat{W})
=
\begin{bmatrix}
\widetilde{\mat{U}}_1 \mat{V}_1 \\
\vdots \\
\widetilde{\mat{U}}_{m_1} \mat{V}_{m_1}
\end{bmatrix}.
\]
Then
\[
\mathcal{R}(g_\phi;\mat{W})
=
\rank\!\left(
\begin{bmatrix}
\mat{V}_1^\top & \cdots & \mat{V}_{m_1}^\top
\end{bmatrix}
\right).
\]
In particular, if $n \le m$, then $\mathcal{R}(g_\phi;\mat{W}) = n$.
\end{lemma}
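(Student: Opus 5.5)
The plan is to compute the maximum and minimum of $\rank(f_\theta(\mat{W}))$ separately and subtract. Write $\mat{P} := [\mat{P}_1,\dots,\mat{P}_{n_1}] \in \mathbb{R}^{m\times n_1 r_c}$ and stack the tunable cores into the block-diagonal matrix $\widetilde{\mat{Q}} := \mathrm{blkdiag}(\widetilde{\mat{Q}}_1,\dots,\widetilde{\mat{Q}}_{n_1}) \in \mathbb{R}^{n_1 r_c \times n}$. Then
\[
f_\theta(\mat{W}) = \mat{P}\,\widetilde{\mat{Q}}.
\]

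\textbf{Minimum.} Choosing $\widetilde{\mat{Q}}_i = \mat{0}$ for all $i$ gives the zero matrix, so $\min_\theta \rank = 0$.

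\textbf{Upper bound on the maximum.} From the factorization, $\col(f_\theta(\mat{W})) \subseteq \col(\mat{P})$, hence $\rank(f_\theta(\mat{W})) \le \rank(\mat{P})$ for every $\theta$.

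\textbf{Attaining the maximum.} Here we need an explicit construction. Because the factorization is full rank and $r_c = \min(m,n_0)$, each $\mat{Q}_i$ has full row rank $r_c$ when $n_0 \ge r_c$; in that case we can pick $\widetilde{\mat{Q}}_i = [\mat{I}_{r_c}\;\mat{0}]$ (up to a column permutation). The block-diagonal $\widetilde{\mat{Q}}$ then has full row rank $n_1 r_c$, i.e. it is surjective as a map, so $\rank(\mat{P}\widetilde{\mat{Q}}) = \rank(\mat{P})$. If instead $n_0 < m$, then $r_c = n_0$, each $\widetilde{\mat{Q}}_i$ is square, and taking $\widetilde{\mat{Q}}_i = \mat{I}$ again makes $\widetilde{\mat{Q}}$ invertible. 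In both cases the upper bound is attained, so $\mathcal{R}(f_\theta;\mat{W}) = \rank(\mat{P}) - 0$.

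\textbf{The ``in particular'' clause.} For $m \le n$, note that $\mat{W} = \mat{P}\,\mathrm{blkdiag}(\mat{Q}_1,\dots,\mat{Q}_{n_1})$, so $\col(\mat{W}) \subseteq \col(\mat{P})$. This gives
\[
m = \rank(\mat{W}) \le \rank(\mat{P}) \le m,
\]
where the first equality uses the full-rank assumption on $\mat{W}$ and the last inequality holds because $\mat{P}$ has $m$ rows. Hence $\rank(\mat{P}) = m$ and $\mathcal{R}(f_\theta;\mat{W}) = m$.

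\textbf{Row-sliced form.} This follows by transposition. Transposing $g_\phi(\mat{W})$ turns it into the column-sliced form with fixed factors $\mat{V}_j^\top$ and tunable factors $\widetilde{\mat{U}}_j^\top$. Applying the argument above yields $\rank([\mat{V}_1^\top,\dots,\mat{V}_{m_1}^\top])$, which equals $n$ when $n \le m$.

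\textbf{Main obstacle.} The delicate step is attaining the maximum. One must exhibit a choice of cores that makes the block-diagonal factor surjective onto the right space, and this requires the correct per-slice dimension bookkeeping $r_c = \min(m,n_0)$ in each of the cases $n_0 \ge m$ and $n_0 < m$. Everything else reduces to the elementary bound $\rank(\mat{A}\mat{B}) \le \rank(\mat{A})$, with equality when $\mat{B}$ has full row rank.
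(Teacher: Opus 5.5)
Your proof is correct and follows the same route as the paper. You bound the rank above by $\col(f_\theta(\mat{W}))\subseteq\col([\mat{P}_1,\dots,\mat{P}_{n_1}])$, obtain the minimum $0$ from $\widetilde{\mat{Q}}_i=\mat{0}$, handle $m\le n$ via $\col(\mat{W})\subseteq\col([\mat{P}_1,\dots,\mat{P}_{n_1}])$, and get the row-sliced form by symmetry; your explicit block-diagonal full-row-rank choice of $\widetilde{\mat{Q}}$ also makes the paper's attainment step more precise. Two details are unneeded: the case split (since $r_c\le n_0$ always, $\widetilde{\mat{Q}}_i=[\mat{I}_{r_c}\;\mat{0}]$ covers both cases) and the remark that each $\mat{Q}_i$ has full row rank, which is not guaranteed when a slice $\mat{W}_i$ is rank-deficient.
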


\begin{proof}
We first prove the column-sliced case. By construction,
\[
f_\theta(\mat{W})
=
[\mat{P}_1 \widetilde{\mat{Q}}_1,\dots,\mat{P}_{n_1} \widetilde{\mat{Q}}_{n_1}].
\]
For each $i$, every column of $\mat{P}_i \widetilde{\mat{Q}}_i$ lies in $\mathrm{col}(\mat{P}_i)$. Hence
\[
\mathrm{col}(f_\theta(\mat{W}))
\subseteq
\mathrm{col}([\mat{P}_1,\dots,\mat{P}_{n_1}]),
\]
which implies
\[
\rank(f_\theta(\mat{W}))
\le
\rank([\mat{P}_1,\dots,\mat{P}_{n_1}]).
\]
Thus
\[
\max_\theta \rank(f_\theta(\mat{W}))
\le
\rank([\mat{P}_1,\dots,\mat{P}_{n_1}]).
\]

On the other hand, since each $\widetilde{\mat{Q}}_i \in \mathbb{R}^{r_c\times n_0}$ is unconstrained, we can choose $\widetilde{\mat{Q}}_i$ so that the columns of $\mat{P}_i \widetilde{\mat{Q}}_i$ span any subspace contained in $\mathrm{col}(\mat{P}_i)$. Aggregating across all slices, we can realize any column set in
\[
\mathrm{col}([\mat{P}_1,\dots,\mat{P}_{n_1}]).
\]
Therefore,
\[
\max_\theta \rank(f_\theta(\mat{W}))
=
\rank([\mat{P}_1,\dots,\mat{P}_{n_1}]).
\]

Moreover, by setting $\widetilde{\mat{Q}}_i = \mat{0}$ for all $i$, we obtain $f_\theta(\mat{W}) = \mat{0}$, hence
\[
\min_\theta \rank(f_\theta(\mat{W})) = 0.
\]
Thus,
\[
\mathcal{R}(f_\theta;\mat{W})
=
\rank([\mat{P}_1,\dots,\mat{P}_{n_1}]).
\]

Now assume $m \le n$ and $\rank(\mat{W}) = m$. Since
\[
\mat{W} = [\mat{P}_1 \mat{Q}_1,\dots,\mat{P}_{n_1} \mat{Q}_{n_1}],
\]
all columns of $\mat{W}$ lie in $\mathrm{col}([\mat{P}_1,\dots,\mat{P}_{n_1}])$, so
\[
m = \rank(\mat{W})
\le
\rank([\mat{P}_1,\dots,\mat{P}_{n_1}]).
\]
Since the latter is at most $m$, we conclude
\[
\rank([\mat{P}_1,\dots,\mat{P}_{n_1}]) = m,
\]
and hence $\mathcal{R}(f_\theta;\mat{W}) = m$.

The row-sliced case follows symmetrically. For
\[
g_\phi(\mat{W})
=
\begin{bmatrix}
\widetilde{\mat{U}}_1 \mat{V}_1 \\
\vdots \\
\widetilde{\mat{U}}_{m_1} \mat{V}_{m_1}
\end{bmatrix},
\]
each row lies in $\mathrm{row}(\mat{V}_j)$. Hence
\[
\mathrm{row}(g_\phi(\mat{W}))
\subseteq
\operatorname{span}\{\mathrm{row}(\mat{V}_j)\}_{j=1}^{m_1},
\]
which implies
\[
\max_\phi \rank(g_\phi(\mat{W}))
=
\rank\!\left(
\begin{bmatrix}
\mat{V}_1^\top & \cdots & \mat{V}_{m_1}^\top
\end{bmatrix}
\right).
\]
Again the minimum rank is $0$. If $n \le m$ and $\rank(\mat{W})=n$, then the row space must have dimension $n$, yielding
\[
\mathcal{R}(g_\phi;\mat{W}) = n.
\]
\end{proof}

\paragraph{Constraints of block-wise fixed-subspace adaptation.}
Although the proposed slice-wise parameterization attains maximal rank capacity, it imposes strong structural constraints on the adapted weight. We characterize these constraints below.

\begin{proposition}[Block-wise fixed-subspace constraint]
\label{prop:block-constraint-col}
Consider the column-sliced parameterization
\[
f_\theta(\mat{W}) = [\mat{P}_1 \widetilde{\mat{Q}}_1,\dots,\mat{P}_{n_1} \widetilde{\mat{Q}}_{n_1}],
\]
with fixed $\{\mat{P}_i\}$. Then for any $\theta$:
\[
\mathrm{col}\!\left(f_\theta(\mat{W})_{(:,\mathcal{I}_i)}\right)
\subseteq
\mathrm{col}(\mat{P}_i),
\]
where $\mathcal{I}_i$ indexes columns belonging to slice $i$. Equivalently,
\[
f_\theta(\mat{W}) \in \mathcal{S}_{\text{col}}
:=
\left\{
[\mat{X}_1,\dots,\mat{X}_{n_1}]
\;\middle|\;
\mathrm{col}(\mat{X}_i)\subseteq \mathrm{col}(\mat{P}_i)
\right\}.
\]
\end{proposition}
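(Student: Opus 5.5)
The claim follows directly from the block structure of $f_\theta(\mat{W})$, and I will prove it by reading off the columns slice by slice. Fix an arbitrary parameter choice $\theta = \{\widetilde{\mat{Q}}_i\}_{i=1}^{n_1}$. By the definition of the column-sliced parameterization in Lemma~\ref{lem:rank-formal}, the columns of $f_\theta(\mat{W})$ indexed by $\mathcal{I}_i$ form exactly the block $\mat{P}_i\widetilde{\mat{Q}}_i \in \mathbb{R}^{m\times n_0}$. So
\[
f_\theta(\mat{W})_{(:,\mathcal{I}_i)} = \mat{P}_i\widetilde{\mat{Q}}_i .
\]

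The key step is that every column of a product $\mat{P}_i\widetilde{\mat{Q}}_i$ is a linear combination of the columns of $\mat{P}_i$. Indeed, its $j$-th column equals $\mat{P}_i\,\widetilde{\mat{q}}_{i,j}$, where $\widetilde{\mat{q}}_{i,j}$ is the $j$-th column of $\widetilde{\mat{Q}}_i$. This column is $\sum_\ell (\widetilde{\mat{q}}_{i,j})_\ell\,\mat{p}_{i,\ell}$, which lies in $\mathrm{col}(\mat{P}_i)$. The column space of the block is the span of these columns, so
\[
\mathrm{col}(\mat{P}_i\widetilde{\mat{Q}}_i)\subseteq\mathrm{col}(\mat{P}_i).
\]
This holds for every $i$ and every $\theta$, because $\mat{P}_i$ is frozen and does not depend on $\theta$.

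For the equivalent membership statement, set $\mat{X}_i := \mat{P}_i\widetilde{\mat{Q}}_i$. Then $f_\theta(\mat{W}) = [\mat{X}_1,\dots,\mat{X}_{n_1}]$, and each $\mat{X}_i$ satisfies the defining condition of $\mathcal{S}_{\text{col}}$, so $f_\theta(\mat{W})\in\mathcal{S}_{\text{col}}$. Conversely, the statement is only asked in the forward direction, so no reverse inclusion needs to be shown. If one wanted it, it would follow by picking $\widetilde{\mat{Q}}_i = \mat{P}_i^{+}\mat{X}_i$, using that $\mat{P}_i$ has full column rank (since $r_c=\min(m,n_0)$ comes from a full-rank factorization).

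There is no real obstacle here. The only point needing care is bookkeeping: the column index set $\mathcal{I}_i$ must correspond exactly to the $i$-th horizontal block in the concatenation. This is immediate from the definition $n = n_0 n_1$, with $\mathcal{I}_i=\{(i-1)n_0+1,\dots,in_0\}$.
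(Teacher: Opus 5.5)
Your proof is correct and follows the paper's reasoning. The paper gives no separate proof of this proposition, but the proof of Lemma~\ref{lem:rank-formal} rests on the same observation: every column of $\mat{P}_i\widetilde{\mat{Q}}_i$ is a linear combination of the columns of $\mat{P}_i$, so membership in $\mathcal{S}_{\text{col}}$ follows block by block (your optional reverse-inclusion aside does not even need full column rank, since $\mat{P}_i\mat{P}_i^{+}$ is always the orthogonal projector onto $\mathrm{col}(\mat{P}_i)$).
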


\begin{proposition}[Row-space constraint (dual form)]
\label{prop:block-constraint-row}
For the row-sliced parameterization
\[
g_\phi(\mat{W})
=
\begin{bmatrix}
\widetilde{\mat{U}}_1 \mat{V}_1 \\
\vdots \\
\widetilde{\mat{U}}_{m_1} \mat{V}_{m_1}
\end{bmatrix},
\]
we have
\[
\mathrm{row}\!\left(g_\phi(\mat{W})_{(\mathcal{J}_j,:)}\right)
\subseteq
\mathrm{row}(\mat{V}_j),
\]
and hence
\[
g_\phi(\mat{W}) \in \mathcal{S}_{\text{row}}
:=
\left\{
\begin{bmatrix}
\mat{Y}_1\\ \vdots \\ \mat{Y}_{m_1}
\end{bmatrix}
\;\middle|\;
\mathrm{row}(\mat{Y}_j)\subseteq \mathrm{row}(\mat{V}_j)
\right\}.
\]
\end{proposition}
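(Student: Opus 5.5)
The statement to prove is Proposition~\ref{prop:block-constraint-row}, the row-space (dual) counterpart of Proposition~\ref{prop:block-constraint-col}. The plan is to argue directly from the definition of $g_\phi(\mat{W})$, row block by row block, and then assemble the blocks. Alternatively, the whole statement follows from Proposition~\ref{prop:block-constraint-col} by transposition; I will note that as a cross-check.

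\emph{Per-block inclusion.} Fix $j\in\{1,\dots,m_1\}$ and let $\mathcal{J}_j$ index the $m_0$ rows of slice $j$. By construction, the submatrix $g_\phi(\mat{W})_{(\mathcal{J}_j,:)}$ equals $\widetilde{\mat{U}}_j\mat{V}_j$. Its $\ell$-th row is $\mat{e}_\ell^\top\widetilde{\mat{U}}_j\mat{V}_j=\mat{c}^\top\mat{V}_j$, where $\mat{c}^\top=\mat{e}_\ell^\top\widetilde{\mat{U}}_j\in\mathbb{R}^{1\times r_r}$. So every row is a linear combination of the rows of $\mat{V}_j$ and lies in $\row(\mat{V}_j)$. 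Row spaces are spans of rows, so
\[
\row\!\left(g_\phi(\mat{W})_{(\mathcal{J}_j,:)}\right)\subseteq\row(\mat{V}_j).
\]
A compact way to say this: for any matrices, $\row(\mat{A}\mat{B})\subseteq\row(\mat{B})$. This holds for every $\phi$, because $\mat{V}_j$ is frozen and only $\widetilde{\mat{U}}_j$ depends on $\phi$.

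\emph{Assembly.} Set $\mat{Y}_j:=\widetilde{\mat{U}}_j\mat{V}_j$. Then $g_\phi(\mat{W})$ is the vertical stack of $\mat{Y}_1,\dots,\mat{Y}_{m_1}$, and each $\mat{Y}_j$ satisfies the constraint defining $\mathcal{S}_{\text{row}}$. Hence $g_\phi(\mat{W})\in\mathcal{S}_{\text{row}}$.

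\emph{Cross-check by duality.} Transposing gives
\[
g_\phi(\mat{W})^\top=[\mat{V}_1^\top\widetilde{\mat{U}}_1^\top,\dots,\mat{V}_{m_1}^\top\widetilde{\mat{U}}_{m_1}^\top].
\]
This is exactly the column-sliced form of Proposition~\ref{prop:block-constraint-col}, with fixed factors $\mat{P}_j=\mat{V}_j^\top$. Applying that proposition and using $\col(\mat{X}^\top)=\row(\mat{X})$ yields the same conclusion.

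There is no real obstacle here. The only point needing care is bookkeeping: the row-index set $\mathcal{J}_j$ must be matched to the correct slice $\mat{V}_j$, and the inclusion must hold for every $\phi$. The latter is immediate since $\mat{V}_j$ does not depend on $\phi$.
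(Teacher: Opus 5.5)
Your proof is correct and is essentially the paper's own argument. The paper gives no standalone proof of this proposition; inside the proof of Lemma~\ref{lem:rank-formal} it notes that each row of $\widetilde{\mat{U}}_j\mat{V}_j$ lies in $\row(\mat{V}_j)$ and says the row-sliced case follows symmetrically from the column-sliced one, which is exactly your direct $\row(\mat{A}\mat{B})\subseteq\row(\mat{B})$ argument together with your transposition cross-check.
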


\paragraph{Implications.}
The above propositions show that adaptation is restricted to a \emph{product of subspaces} across slices. This leads to several nontrivial consequences:

\textbf{(1) No cross-block subspace transfer.} Columns in slice $i$ cannot utilize directions in $\mathrm{col}(\mat{P}_j)$ for $j\neq i$. Hence global redistribution of information across slices is prohibited.

\textbf{(2) Restricted perturbation geometry.} Let $\Delta \mat{W} = f_\theta(\mat{W}) - \mat{W}$. Then
\[
\Delta \mat{W}
=
[\mat{P}_1 \Delta \mat{Q}_1,\dots,\mat{P}_{n_1} \Delta \mat{Q}_{n_1}],
\]
so each block update lies in a fixed linear subspace. In particular,
\[
\Delta \mat{W} \in \mathcal{T}
:=
\left\{
[\mat{P}_1 \mat{Z}_1,\dots,\mat{P}_{n_1} \mat{Z}_{n_1}]
\right\},
\]
which is a strict subset of $\mathbb{R}^{m\times n}$ unless all $\mathrm{col}(\mat{P}_i)$ span $\mathbb{R}^m$ individually.

\textbf{(3) Coupled global rank vs local expressivity.} Although the global rank can reach $\rank([\mat{P}_1,\dots,\mat{P}_{n_1}])$, each slice individually satisfies
\[
\rank(\mat{P}_i \widetilde{\mat{Q}}_i) \le r_c.
\]
Hence expressivity is locally bottlenecked even when global rank is maximal.

\textbf{(4) Subspace preservation bias.} The adaptation preserves the pretrained column subspaces $\{\mathrm{col}(\mat{P}_i)\}$. Therefore, learning is biased toward \emph{reweighting and recombining existing features} rather than creating new directions.

\paragraph{Comparison to LoRA and spectral adapters.}
Unlike LoRA, which introduces a global low-rank update $\Delta \mat{W} = \mat{A}\mat{B}^\top$ spanning all columns, the above parameterization enforces block-wise independence of subspaces. Compared to spectral adapters that directly modify singular directions, this approach fixes the singular subspaces and only adapts coefficients within them.

\paragraph{When is this favorable?}
This constraint is beneficial when pretrained subspaces are already well-aligned with downstream tasks, as it:
(i) preserves stable directions,
(ii) reduces destructive interference,
and (iii) improves conditioning by restricting updates to structured subspaces.

\subsection{Preconditioner derivations}
\label{sec:appendix:precond}

Let $\mat{W}_k = \mat{U}_k \mat{\Sigma}_k \mat{V}_k^\top$ be the SVD of one block. Let $\mathcal{L}$ be the loss, $\mat{y} = \mat{W}\mat{x}$, $\mat{g} = \partial \mathcal{L}/\partial \mat{y}$. Standard identities give the unconstrained per-block gradient $\partial \mathcal{L}/\partial \mat{W}_k = \mat{g}\mat{x}_k^\top$. We specialize to the four \method\ variants.

\paragraph{Corner 1: output-side training, $\mat{S}$ merged to frozen $\mat{R}$.} The parameterization is $\mat{W}_k = \mat{L}_k \mat{R}_k$ with $\mat{L}_k$ trainable and $\mat{R}_k = \mathrm{diag}(\mat{S}_k)\,\mat{V}_k^\top$ frozen. The gradient on $\mat{L}_k$ is
\[ \partial \mathcal{L}/\partial \mat{L}_k \;=\; (\mat{g}\mat{x}_k^\top) \mat{R}_k^\top \;=\; \mat{g}\mat{x}_k^\top\,\mat{V}_k\,\mathrm{diag}(\mat{S}_k). \]
A GD step $\mat{L}_k \gets \mat{L}_k - \eta\, \mat{g}\mat{x}_k^\top\,\mat{V}_k\,\mathrm{diag}(\mat{S}_k)$ yields
\[ \Delta \mat{W}_k \;=\; \Delta \mat{L}_k \cdot \mat{R}_k \;=\; -\eta\, \mat{g}\mat{x}_k^\top\,\mat{V}_k\,\mathrm{diag}(\mat{S}_k)^2\,\mat{V}_k^\top, \]
the ``$\mathrm{diag}(\mat{S})^2$-weighted'' row of Table~\ref{tab:preconditioners}: $\mathrm{diag}(\mat{S}_k)$ enters once via $\mat{R}_k^\top$ in the gradient and once via $\mat{R}_k$ in the reconstruction.

\paragraph{Corner 2: output-side training, $\mat{S}$ merged to trainable $\mat{L}$.} The parameterization is $\mat{W}_k = \mat{L}_k\mat{R}_k$ with $\mat{L}_k = \mat{U}_k\,\mathrm{diag}(\mat{S}_k)$ trainable and $\mat{R}_k = \mat{V}_k^\top$ frozen. The gradient on $\mat{L}_k$ is
\[ \partial \mathcal{L}/\partial \mat{L}_k \;=\; (\mat{g}\mat{x}_k^\top)\mat{R}_k^\top \;=\; \mat{g}\mat{x}_k^\top\,\mat{V}_k. \]
A GD step yields $\Delta\mat{L}_k = -\eta\,\mat{g}\mat{x}_k^\top\,\mat{V}_k$, hence
\[ \Delta \mat{W}_k \;=\; \Delta\mat{L}_k\cdot\mat{R}_k \;=\; -\eta\, \mat{g}\mat{x}_k^\top\,\mat{V}_k\mat{V}_k^\top, \]
the ``fair'' projector $\mat{V}_k\mat{V}_k^\top$ row of Table~\ref{tab:preconditioners}: when $\mat{S}$ is absorbed into the trainable side, its scaling is freely re-tuned by the optimizer and drops out of the effective preconditioner, leaving an unweighted projection onto $\row(\mat{V}_k^\top)$.

\paragraph{Corner 3 (default): input-side training, $\mat{S}$ kept separate trainable.} Parameterize $\mat{W}_k = \mat{L}_k\,\mathrm{diag}(\mat{S}_k)\,\mat{R}_k$ with $\mat{L}_k = \mat{U}_k$ frozen and $\mat{R}_k = \mat{V}_k^\top$, $\mat{S}_k$ trainable. The gradient on $\mat{R}_k$ is $\partial\mathcal{L}/\partial\mat{R}_k = \mathrm{diag}(\mat{S}_k)\,\mat{U}_k^\top\,\mat{g}\mat{x}_k^\top$; the gradient on $\mat{S}_k$ is the diagonal of $\mat{U}_k^\top\,\mat{g}\mat{x}_k^\top\,\mat{V}_k$. A GD step on $\mat{R}_k$ gives $\Delta\mat{R}_k = -\eta\,\mathrm{diag}(\mat{S}_k)\,\mat{U}_k^\top\,\mat{g}\mat{x}_k^\top$, so the effective weight update from $\mat{R}$ is
\[ \Delta \mat{W}_k\big|_{\mat{R}} \;=\; \mat{U}_k\,\mathrm{diag}(\mat{S}_k)\,\Delta\mat{R}_k \;=\; -\eta\, \mat{U}_k\,\mathrm{diag}(\mat{S}_k)^2\,\mat{U}_k^\top\,\mat{g}\mat{x}_k^\top, \]
where $\mathrm{diag}(\mat{S}_k)$ enters twice: once from $\mat{L}_k\,\mathrm{diag}(\mat{S}_k)$ multiplying $\Delta\mat{R}_k$, and once inside $\Delta\mat{R}_k$ itself. Combined with the $\mat{S}$ update:
\[ \Delta \mat{W}_k \;=\; -\eta\, \mat{U}_k\,\mathrm{diag}(\mat{S}_k)^2\,\mat{U}_k^\top\,\mat{g}\mat{x}_k^\top \;+\; \mat{U}_k\,\mathrm{diag}(\Delta \mat{S}_k)\,\mat{V}_k^\top, \]
the default preconditioner of Equation~\eqref{eq:default-precond}. The first term combines column-space projection ($\mat{U}_k\mat{U}_k^\top$, non-trivial since $\mat{U}_k$ is rectangular) with singular-value preconditioning ($\mathrm{diag}(\mat{S}_k)^2$ scaling).

\paragraph{Corner 4: input-side training, $\mat{S}$ merged to frozen $\mat{L}$.} The parameterization is $\mat{W}_k = \mat{L}_k\mat{R}_k$ with $\mat{L}_k = \mat{U}_k\,\mathrm{diag}(\mat{S}_k)$ frozen and $\mat{R}_k = \mat{V}_k^\top$ trainable. The gradient on $\mat{R}_k$ is
\[ \partial \mathcal{L}/\partial \mat{R}_k \;=\; \mat{L}_k^\top(\mat{g}\mat{x}_k^\top) \;=\; \mathrm{diag}(\mat{S}_k)\,\mat{U}_k^\top\,\mat{g}\mat{x}_k^\top. \]
A GD step yields $\Delta\mat{R}_k = -\eta\,\mathrm{diag}(\mat{S}_k)\,\mat{U}_k^\top\,\mat{g}\mat{x}_k^\top$, hence
\[ \Delta \mat{W}_k \;=\; \mat{L}_k\cdot\Delta\mat{R}_k \;=\; -\eta\, \mat{U}_k\,\mathrm{diag}(\mat{S}_k)^2\,\mat{U}_k^\top\,\mat{g}\mat{x}_k^\top, \]
the principal-biased $\mat{U}_k\,\mathrm{diag}(\mat{S}_k)^2\,\mat{U}_k^\top$ row of Table~\ref{tab:preconditioners} (PiSSA-like): $\mathrm{diag}(\mat{S}_k)$ enters once via $\mat{L}_k^\top$ in the gradient and once via $\mat{L}_k$ in the reconstruction, mirroring Corner 1 with the input/output sides swapped. The remaining row of Table~\ref{tab:preconditioners} (input-side training with $\mat{S}$ merged into the trainable $\mat{R}$) follows by the same calculation as Corner 2 with the input/output sides swapped, yielding the unweighted projector $\mat{U}_k\mat{U}_k^\top$.

\section{Full per-layer sweeps for the motivating figures}
\label{sec:appendix:motivation_sweeps}

This section provides the full per-layer / per-module versions of the panels shown for a single representative weight matrix in Figures~\ref{fig:motivation_svdft} and~\ref{fig:motivation_fura} of the body. The body uses layer~15 \texttt{q\_proj} as a representative slice; the figures here verify that the trends hold across all transformer layers and across the matched set of linear modules.

\begin{figure}[H]
  \centering
  \includegraphics[width=\linewidth]{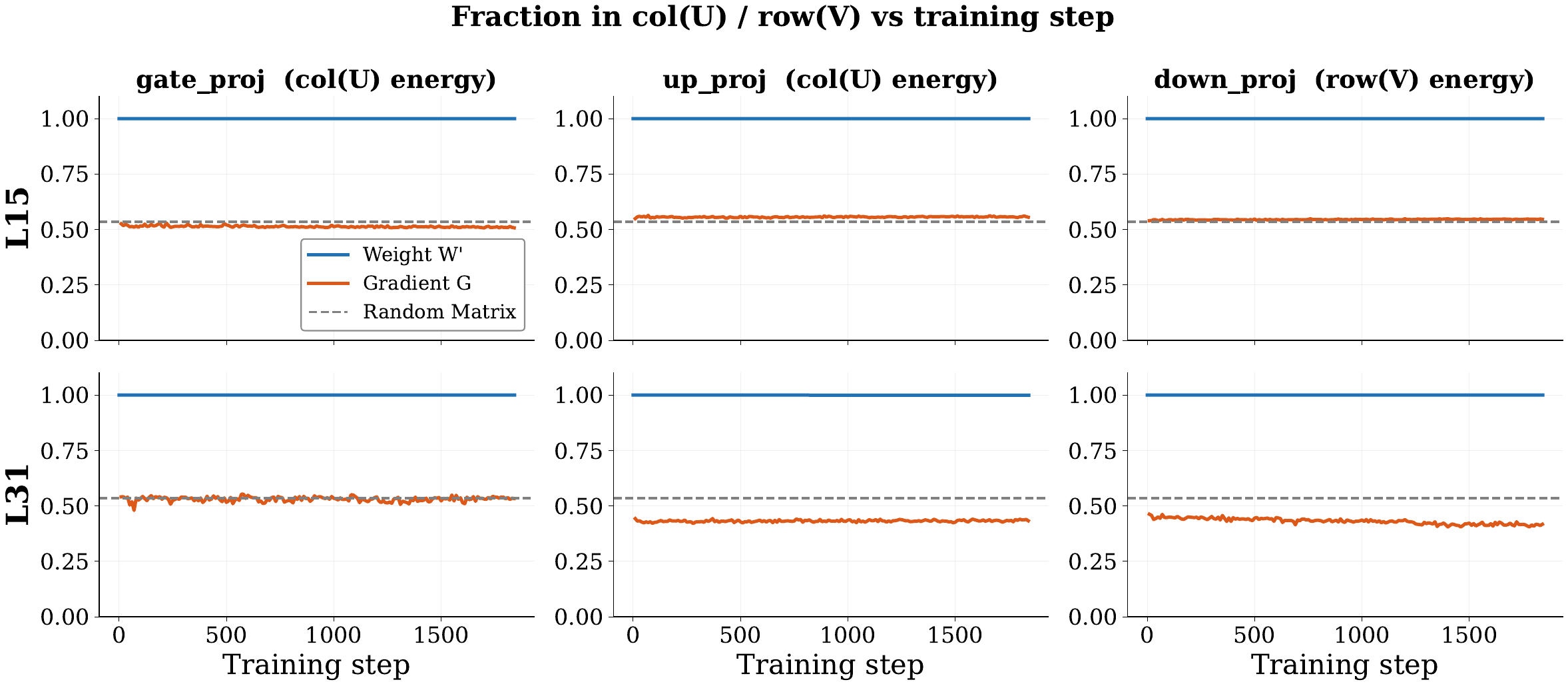}
  \caption{Full per-layer / per-module sweep for Figure~\ref{fig:motivation_svdft}\textbf{(a)}: energy-ratio $\rho(\mat{G}_{\mat{W}};\mat{U})$ and $\rho(\mat{W}';\mat{U})$ tracked through Full FT training on LLaMA-3-8B / Math-10K. The body figure shows layer~15 \texttt{q\_proj}; the same pattern (gradient near the random baseline, weight near $1.0$) holds across all layers and modules.}
  \label{fig:motivation_svdft_a_sweep}
\end{figure}

\begin{figure}[H]
  \centering
  \includegraphics[width=\linewidth]{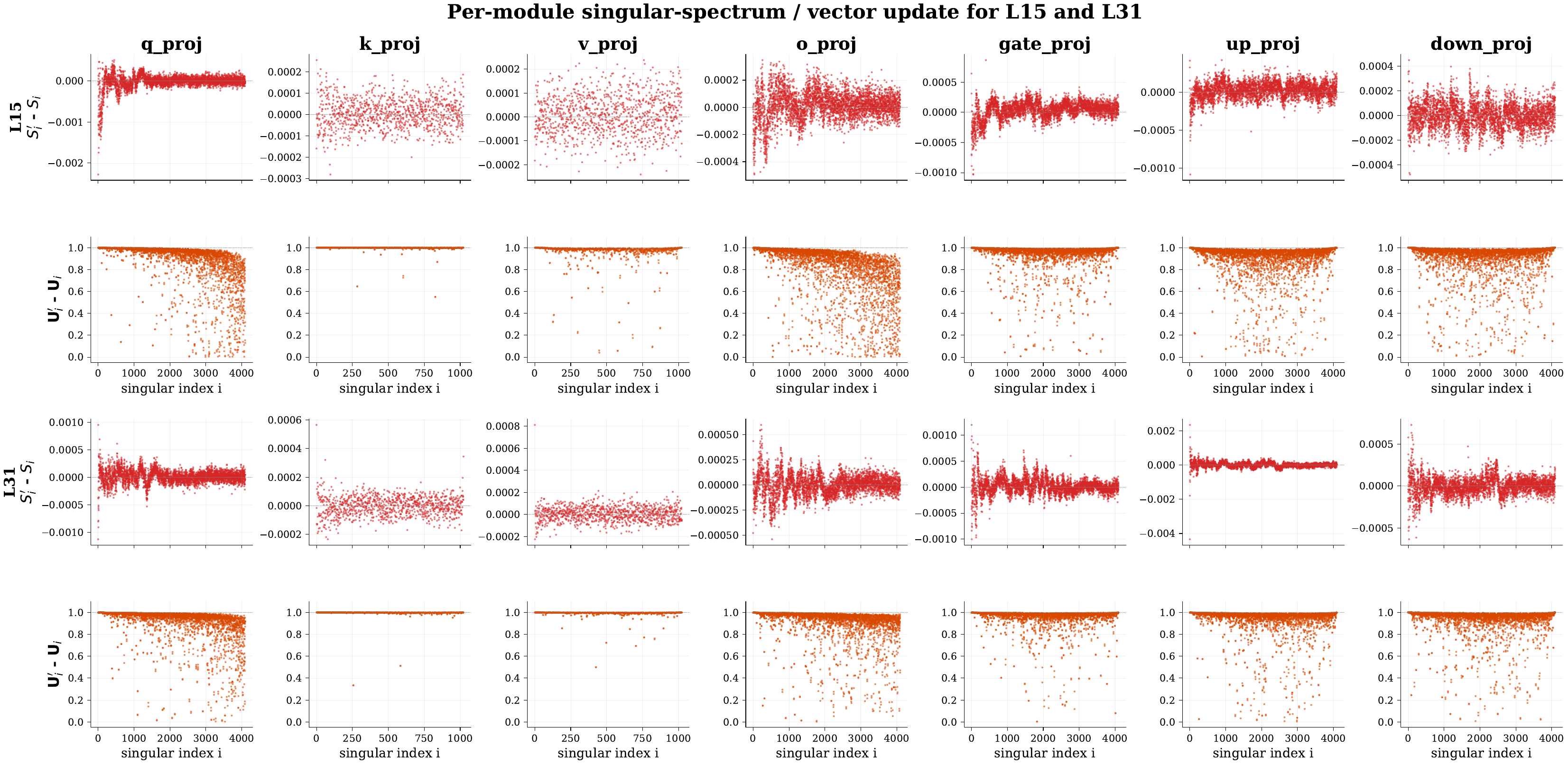}
  \caption{Full per-layer / per-module sweep for Figure~\ref{fig:motivation_svdft}\textbf{(b)}: comparison of singular values and singular vectors of $\mat{W}$ versus $\mat{W}'$ after Full FT on LLaMA-3-8B / Math-10K. The body figure shows layer~15 \texttt{q\_proj}; across all layers and modules only a few singular values change significantly while most remain close to their pretrained values.}
  \label{fig:motivation_svdft_b_sweep}
\end{figure}

\begin{figure}[H]
  \centering
  \includegraphics[width=\linewidth]{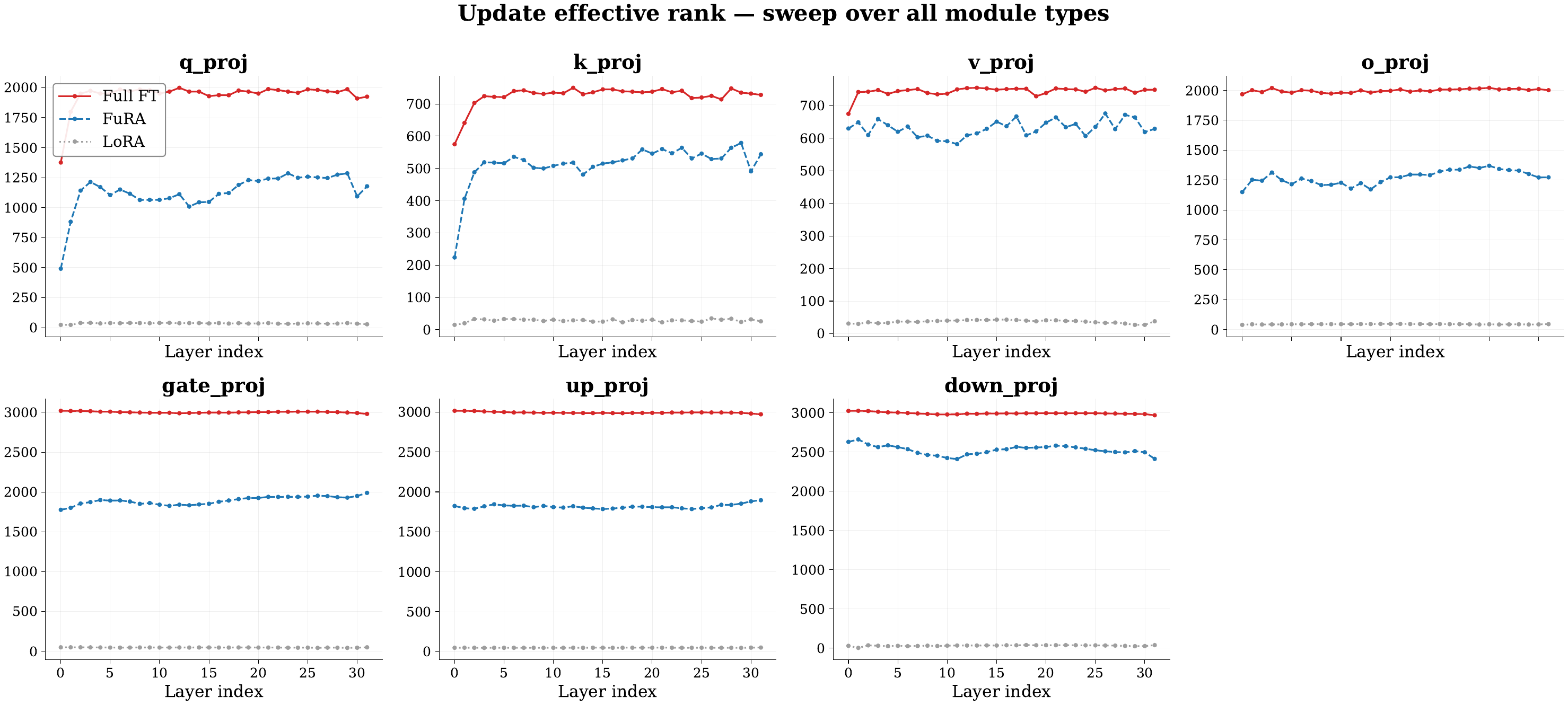}
  \caption{Full per-layer / per-module sweep for Figure~\ref{fig:motivation_fura}\textbf{(a)}: effective rank of $\Delta\mat{W}$ on the Qwen3-1.7B GRPO math-RL task. The body figure shows a representative module; here we report all modules in every transformer layer, confirming that \method's effective rank closely tracks Full FT layer-by-layer while remaining well above LoRA.}
  \label{fig:motivation_fura_a_sweep}
\end{figure}


\end{document}